\theoremstyle{plain}
\newtheorem{theorem}{Theorem}[section]
\newtheorem{example}{Example}[section]
\newtheorem{proposition}[theorem]{Proposition}
\newtheorem{lemma}[theorem]{Lemma}
\newtheorem{corollary}[theorem]{Corollary}
\theoremstyle{definition}
\newtheorem{definition}[theorem]{Definition}
\theoremstyle{remark}
\newtheorem{remark}[theorem]{Remark}
\renewcommand\and{%
\end{tabular}%
\hfill
\begin{tabular}[t]{>{\centering\arraybackslash}p{.45\textwidth}}}%
\title{Matrix Estimation for Individual Fairness}
\author{Cindy Y. Zhang$^*$ \\ \texttt{cindyz@princeton.edu}\\ Princeton University  \and Sarah H. Cen$^*$ \\ \texttt{shcen@mit.edu}\\ Massachusetts Institute of Technology \and
Devavrat Shah \\ \texttt{devavrat@mit.edu}\\ Massachusetts Institute of Technology}
\def\thefootnote{*}\footnotetext{Indicates equal contribution}
\date{}
\begin{document}

\maketitle
\begin{abstract}
In recent years, multiple notions of algorithmic fairness have arisen. 
One such notion is individual fairness (IF), which requires that individuals who are similar receive similar treatment.
In parallel, matrix estimation (ME) has emerged as a natural paradigm for handling noisy data with missing values.
In this work, we connect the two concepts. 
We show that pre-processing data using ME can improve an algorithm's IF without sacrificing performance.
Specifically, 
we show that using a popular ME method known as singular value thresholding (SVT) to pre-process the data provides a strong IF guarantee under appropriate conditions. 
We then show that, under analogous conditions, SVT pre-processing also yields estimates that are consistent and approximately minimax optimal.
As such, the ME pre-processing step does not, under the stated conditions, increase the prediction error of the base algorithm, i.e., does not impose a fairness-performance trade-off.
We verify these results on synthetic and real data. 
\end{abstract}
\renewcommand*{\thefootnote}{\arabic{footnote}}

\section{Introduction}
As data-driven decision-making becomes more ubiquitous, there is increasing attention on the \emph{fairness} of machine learning (ML) algorithms.  
 Because what is deemed to be fair is context-dependent (e.g., reflects a given value system), there is no universally accepted notion of fairness. 

One notion of algorithmic fairness
is \emph{individual fairness (IF)}, which is distinct from notions of group fairness (e.g., equalized odds).  
Stated informally, IF says that similar individuals should receive similar treatment. 
More precisely, an algorithm $f: \cX \rightarrow \cY$ acting on a set of individuals $\cX$ is individually fair if for any two individuals $a, b \in \cX,$
\begin{equation}\label{eq:if-gen}
    D(f(a), f(b)) \leq L \cdot d(a,b) ,
\end{equation}
for the choice of distance metrics $d$ and $D$. 
The \emph{Lipschitz constant} $L$ captures how strictly the IF condition is enforced. 
An algorithm $f$ that satisfies IF ensures that the outcomes between two individuals who are close in feature space $\cX$ also receive outcomes that are close in outcome space $\cY$, 
where the level of closeness is captured by $L$. 
A smaller Lipschitz constant therefore  implies a stronger IF constraint.

\begin{figure}
\vskip 0.2in
\begin{center}
\centerline{\includegraphics[width=0.9\columnwidth]{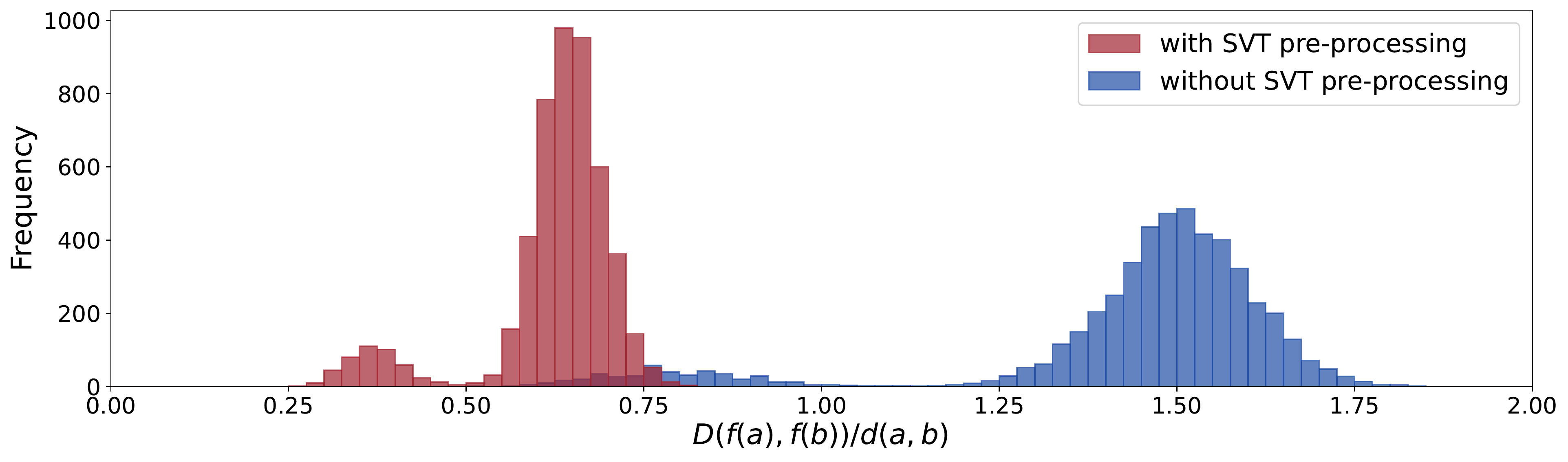}}
\vskip -0.1in
\caption{
We run a deep neural network on synthetic data with and without SVT pre-processing (see Section \ref{sec:experiments}, Experiment \#1 for details). 
We randomly select pairs $a, b \in \cX$ then compute the ratio $D(f(a),f(b))/d(a,b)$, 
where $f$ denotes the neural network with (red) and without (blue) SVT pre-processing.
As shown, 
applying SVT pre-processing results in lower ratios,
which indicates that it improves individual fairness, as defined in \eqref{eq:if-gen}. 
Indeed, 
we show in Section \ref{sec:main_results} that, under appropriate conditions, 
SVT pre-processing strengthens an algorithm's IF guarantee. }
\label{fig:lipschitz-diff}
\end{center}
\vskip -0.25in
\end{figure}

In parallel, 
matrix estimation (ME) has arisen as a natural paradigm to handle data that is noisy and/or has missing values. 
In this work, 
we propose a two-step procedure in which the data (e.g., training data) is first pre-processed using a ME technique known as \emph{singular value thresholding (SVT)}
before being used by an inference algorithm
$h$ (e.g., a neural network). 
We show that, 
under appropriate conditions, 
this pre-processing step \emph{strengthens} the IF guarantee of the inference algorithm, 
i.e., combining SVT with $h$ results in a lower Lipschitz constant in \eqref{eq:if-gen} than $h$ does alone.

Although SVT can improve an algorithm's IF, 
it is not clear whether such an improvement comes at a cost to the algorithm's performance. 
In this work, we show that the same thresholds that allow SVT to improve IF \emph{also} imply that SVT has strong performance guarantees. In other words, under the appropriate conditions,
SVT improves IF without imposing a performance cost in settings where ME can be applied. Our problem setup is visualized in Figure \ref{fig:overview} and described in detail in Section \ref{sec:problem_statement}.

Our main contributions can be summarized as follows:  

\begin{itemize}[leftmargin=*, topsep=0pt]

\item \textbf{We show SVT pre-processing has strong IF guarantees}. 
ME is used in high-dimensional inference to handle sparse, noisy data.
One of the most popular ME methods is SVT. 
In Sections \ref{sec:SVT_IF_Z}-\ref{sec:SVT_IF_A}, we derive a set of conditions under which SVT pre-processing strengthens the IF guarantees of the inference algorithm with respect to the observed covariates and provides an approximate IF guarantee with respect to the (unknown) ground truth covariates.
We then use this result to explore how SVT affects predictions in different data regimes.

\item \textbf{We show that IF under SVT does not hurt asymptotic performance}. 
In Section \ref{sec:USVT}, 
we show that achieving IF using SVT pre-processing does not necessarily hurt performance. 
Specifically, 
we show that the same conditions that are needed for SVT to guarantee IF mirror the conditions required under a popular method known as universal singular value thresholding (USVT). 
Because USVT has strong performance guarantees (it produces an estimator that is consistent and approximately minimax \cite{chatterjee}), 
this connection implies that SVT pre-processing can achieve IF without imposing a performance cost. 
Stated differently, enforcing IF via SVT pre-processing does not harm performance because it places no further restrictions on ME than the performance-based method USVT. 
\begin{figure*}[!htbp]
\begin{center}
\centerline{\includegraphics[width=\textwidth]{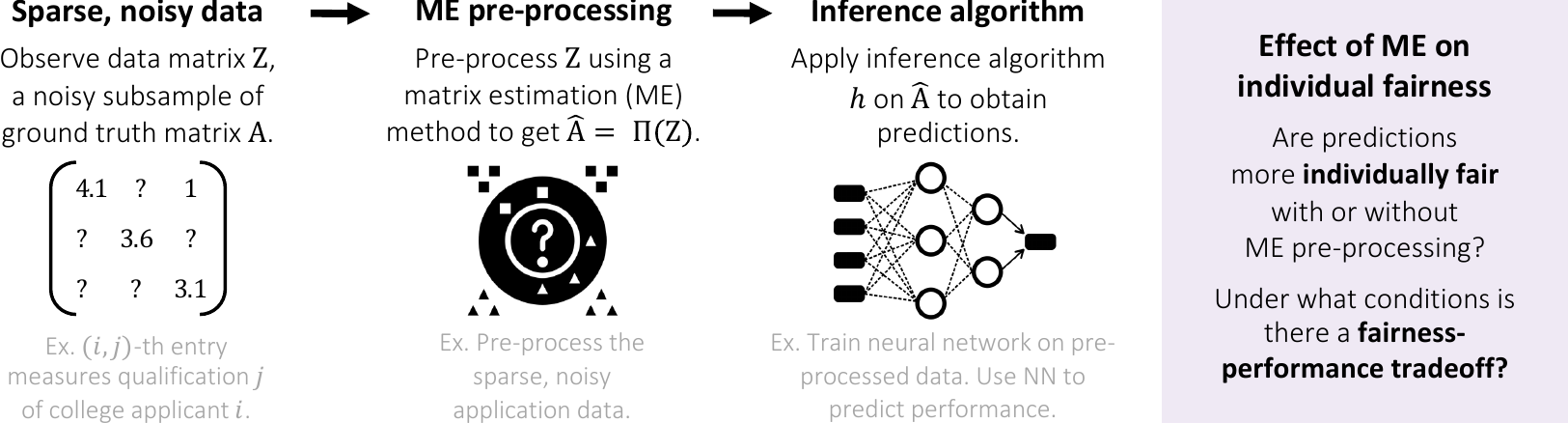}}
\vskip -0.05in
\caption{
We study the effect of ME pre-processing on IF and performance in settings where we need to perform an inference task using sparse, noisy data. Our main results show that SVT, a popular ME method, provides strong IF guarantees and does not necessarily hurt performance when used as a pre-processing step.}
\label{fig:overview}
\end{center}
\vskip -0.25in
\end{figure*}

\item \textbf{We empirically verify these results on real and synthetic datasets}. 
In Section \ref{sec:experiments}, we demonstrate our findings on synthetic data and the MovieLens 1M dataset. 
We visualize the effect of SVT pre-processing on IF.
Figure \ref{fig:lipschitz-diff}, for example, illustrates how the ratio $D(f(a), f(b)) / d(a, b)$ decreases under SVT pre-processing. 
Smaller values indicate a stronger IF guarantee. 
We also demonstrate the effect of SVT pre-processing on performance. 
\end{itemize}

To the best of our knowledge, this is the first work that establishes a theoretical link between IF and ME.

\section{Related Work}\label{sec:rw}
\textbf{Matrix estimation (ME)}. 
ME studies the problem of estimating the entries of a matrix from noisy observations of a subset of the entries \citep{candes2010power,recht2011simpler,keshavan2010matrix,negahban2012restricted,davenport20141,chatterjee,chen2015fast}. 
ME is a class of methods that can be applied to any data expressed in matrix form. 
Specifically, suppose there is a latent matrix, and one can only obtain noisy samples of a subset of its entries. 
The goal of ME is to estimate the values of every entry based on the noisy subsamples.

ME is used, for example, by recommender systems to estimate a user's interest in different types of content \citep{koren2009matrix,song2016blind,borgs2017thy}.
In fact, the winning solution of the Netflix Prize was built on ME methods \citep{koren2009bellkor}.
ME has also been used 
to study social networks \citep{anandkumar2013tensor,abbe2015community,hopkins2017efficient};  %
to impute and forecast a time series \citep{agarwal2018model,amjad2018robust}; 
to aggregate information in crowdsourcing \citep{shah2018reducing}; 
to improve robustness against adversarial attacks in deep learning \citep{yang2019me}; 
and more.\\

\noindent \textbf{Singular value thresholding (SVT)}.
There is an extensive literature on ME and the closely related areas of matrix completion and matrix factorization. 
While there are various approaches \citep{rennie2005fast}, 
spectral methods are among the most popular \citep{candes2010power,mazumder2010spectral,keshavan2010matrix,keshavan2010matrix2}

One such method is SVT \citep{cai2010singular}, which first factorizes the matrix of observations, then
reconstructs it using only the singular values that exceed a predetermined threshold. 
It is well-known that SVT is a shrinkage operator that provides a solution to a nuclear norm minimization problem.
\emph{Universal singular value thresholding} (USVT) builds on SVT by proposing an adaptive threshold that produces an estimator that is both consistent and approximately minimax \citep{chatterjee}. 
We review SVT and USVT in Sections \ref{sec:SVT} and \ref{sec:USVT}.\\

\noindent \textbf{Individual fairness (IF)}. 
 IF
 is the notion that similar individuals should receive similar treatment \citep{dwork2012fairness,barocas2018fairness}, 
 as formalized in \eqref{eq:if-gen}. 
As an example, suppose individuals A and B apply for job interviews at the same time with similar (observed) qualifications $a$ and $b$.
Then, IF requires that A and B receive interview requests at similar rates.
IF is distinct from notions of group fairness
(e.g., statistical parity in the outcomes across demographic groups), 
but there are conditions under which IF implies group fairness \citep{dwork2012fairness}.

Under IF, similarity is captured by the {choice} of distance metrics $D$ and $d$, and IF is enforced as a Lipschitz constraint based on the chosen metrics. 
How to define ``similarity'' between individuals and their outcomes (i.e., how to choose the distance metrics) has been the subject of significant debate  \citep{gajane2017formalizing,beutel2019fairness,ilvento2019metric,beutel2019fairness,gillen2018online,bechavod2020metric}.
In this work, 
we allow for any $D$. 
One of our IF results is given for $d$ as the $\ell^1$ norm
and the other for $d$ as the $\ell^q$ norm. \\

\noindent \textbf{Fairness and collaborative filtering}. 
In recommendation, collaborative filtering algorithms leverage similarities between users to infer user preferences, and ME can be viewed as one such algorithm. 
There is some work on the fairness of collaborative filtering, 
and these typically study group fairness \cite{kamishima2012enhancement,yao2017beyond,beutel2019fairness,foulds2020intersectional,pitoura2021fairness, shao2022faircf}. 
A small number of works examine notions of fairness related to individuals \cite{serbos2017fairness,biega2018equity,stratigi2020fair},
but they are distinct from our notion of IF as formulated by Dwork et al. \cite{dwork2012fairness}. 
To our knowledge, we provide the first theoretical analysis connecting IF to ME and  collaborative filtering, which can be found in Section \ref{sec:main_results}. \\ 

\noindent \textbf{Accuracy}. 
One common thread of interest in algorithmic fairness is the fairness-accuracy trade-off \cite{farnadi2018fairness,zhu2018fairness,liu2018personalizing,islam2020neural}. 
By establishing a connection between IF and USVT, 
we show in Section \ref{sec:USVT} that IF can be achieved without significant performance costs in ME applications, including collaborative filtering.

\section{Problem Statement}\label{sec:problem_statement}
\subsection{Setup}\label{sec:setup}

Consider a setting with $m$ individuals. 
Suppose there is an unknown ground truth matrix $A \in \bbR^{m \times n}$, 
where each row in $A$ corresponds to an individual such that the $i$-th row $\bA_i \in \bbR^{n}$ 
is an unknown $n$-dimensional feature vector that
describes individual $i \in [m]$. 
Without loss of generality, suppose that $A_{ij} \in [-1,1]$ for all $i \in [m]$ and $j \in [n]$.\footnote{
For any $A$ whose entries are finite such that $| A_{ij} | < \infty$ for all $i \in [m]$ and $j \in [n]$, one can always translate and rescale $A$ to be between $-1$ and $1$, then adjust the final result accordingly.}

Suppose that it is possible to observe a noisy subsample of $A$'s entries.
Formally, 
let $\Omega \subset [m] \times [n]$ denote the index set of observed entries 
and  $\cZ = [-1,1] \cup \{\emptyset\}$.
Let $Z \in \cZ^{m \times n}$ denote
the matrix of observations, 
where each entry of $Z$ is a random variable,
$\bbE Z_{ij} = A_{ij}$ if $(i,j) \in \Omega$, 
and
$Z_{ij} = \emptyset$, otherwise. 
As such, 
the $i$-th row $\bZ_i \in \cZ^{n}$ denotes the observed covariates for individual $i$. 
For the remainder of this work, let $\mathbf{B}_i$ denote the $i$-th row and $\mathbf{b}_i$ denote the $i$-th column of a matrix $B$.
\\

\noindent 
\textbf{Inference task}.
We consider the following inference task.  
Make a prediction $\by \in \cY$ for individual $i \in [m]$ 
using the observations (i.e., training data) $Z$. 
Let $\cF = \{ f : [m] \times \cZ^{m \times n} \rightarrow \cY \}$ denote the class of algorithms that perform this inference task. 
Note that the output of $f$ could be a deterministic value or a distribution over possible values.

\subsection{Individual Fairness}\label{sec:IF}

{Individual fairness} (IF) is the notion that \emph{similar individuals should receive similar treatments} \citep{dwork2012fairness}. 
IF is formulated as a $(D,d)$-Lipschitz constraint, as follows.

\begin{definition}[IF with respect to observed covariates]\label{def:IF_z}
Consider an observation matrix $Z \in \cZ^{m \times n}$. 
An algorithm $f \in \cF$ is $(D, d)$-\emph{individually fair on $Z$} if 
    \begin{align}
	D( f( i , Z ), f(j , Z ) ) \leq L \cdot d(\bZ_i , \bZ_j) \hspace{0.2in} \forall i,j \in [m], \label{eq:lipschitz_constraint_Z}
    \end{align}
 where $L \geq 0$ does not depend on $i$ or $j$, 
 $D$ is a metric on $\cY$, 
 and $d$ is a metric on $\cZ^n$. 
\end{definition}

\begin{definition}[IF with respect to latent covariates]\label{def:IF_a}
Consider an observation matrix $Z \in \cZ^{m \times n}$ and ground truth matrix $A \in [-1,1]^{m \times n}$.
An algorithm $f \in \cF$ is $(D, d)$-\emph{individually fair on $A$} if 
	\begin{align}
		D( f( i , Z), f(j , Z ) ) \leq L \cdot d(\bA_i , \bA_j)  \label{eq:lipschitz_constraint_a} \hspace{0.2in} \forall i,j \in [m],
	\end{align}
  where $L \geq 0$ does not depend on $i$ and $j$, 
  $D$ is a metric on $\cY$, 
  and $d$ is a metric on $[-1, 1]^n$. 
\end{definition}
\vspace{6pt}

\noindent 
\textbf{Problem statement}. 
We focus on a subclass of algorithms $\cF( \cH , \Pi ) = \{  f = h \circ \Pi : h \in \cH \} \subset \cF$, where $\cH \subset \{ h : [m] \times [-1,1]^{m \times n} \rightarrow \cY \}$
and $\Pi : \cZ^{m \times n} \rightarrow [-1,1]^{m \times n}$. 
Intuitively, $\Pi$ is a pre-processing algorithm that takes in the (sparse and noisy) data $Z$ and produces an estimate $\Pi(Z)$ of the unknown ground truth matrix $A$. 
The inference algorithm $h$ is then applied on top of $\Pi$ such that $f(i , Z) = h(i , \Pi(Z))$. 
In this work, 
we examine the IF of $f$ relative to $h$ when 
$\Pi$ is given by a ME method, 
i.e., \emph{how a ME pre-processing step affects the IF of an inference algorithm}. 

\subsection{Examples} \label{sec:examples}

The setup in Section \ref{sec:setup} can be applied to many problems in which the training data and algorithmic inputs are \emph{noisy}, \emph{sparse}, or both. 
Consider the following examples and the implications of IF. 

\begin{example}[Recommendation]\label{ex:recommendation}
	Consider a platform that provides personalized movie recommendations to its $m$ users based on sparse, noisy observations of their preferences. 
	Suppose that the movie preferences of each user $i \in [m]$ can be described by an unknown $n$-dimensional vector $\bA_i \in \bbR^n$. 
	For instance, $a_{ij} \in [-1, 1]$ could denote the ground-truth preference of user $i$ for movie $j \in [n]$. 
    Although $A = [\bA_1 ,  \hdots , \bA_m]^\top$ is unknown, 
	the platform receives occasional feedback from users in the form of ratings
	and can also observe the users' viewing behaviors. 
	Let these sparse, noisy observations be stored in $Z$, where
	$Z_{ij} = \emptyset$ implies that user $i$ has not rated movie $j$. 
 
    The goal of the platform is to estimate the users' movie preferences. 
	Note that $f \in \cF$ can leverage other information (e.g., ratings by other users, as done in collaborative filtering). 
	In this example, IF on $Z$ requires that users with similar viewing and rating behaviors receive similar recommendations. 
	IF on $A$ implies that users with similar latent (i.e., unknown) movie preferences receive similar recommendations. 
\end{example}

\begin{example}[Admissions]\label{ex:admissions}
	Consider an admissions setting in which there are $m$ applicants. 
	Suppose that, for the purposes of admissions, each applicant $i \in [m]$ is described by an unknown $n$-dimensional vector $\bA_i \in \bbR^n$.
	Suppose each individual $i$ submits an application $\bZ_i$, which contains sparse, noisy measurements of $\bA_i$. 
	For example, one's standardized test score in math is a noisy measurement of one's math abilities. 
	Data sparsity can occur when one applicant includes information that another does not (e.g., one may list ``debate club'' on their resume while another does not, but this sparsity does not necessarily imply that the latter is worse at public speaking). 
	As an output,  $f \in \cF$ could produce an admissions score $\by \in [0,1]$. 
	In this example, IF on $Z$ requires that applicants with similar applications receive similar admissions scores. 
	IF on $A$ implies that applicants whose true (but unknown) qualifications are similar receive similar admissions scores.
\end{example}
Although IF on $A$ is desirable, 
one generally requires IF on $Z$, 
i.e., that an algorithm ensures IF with respect to the information at its disposal. 
Consider Example \ref{ex:admissions}. 
Suppose that two applicants $i$ and $j$ have similar ground-truth features but the first $n/2$ values of $\bZ_{i}$ are $\emptyset$ while last $n/2$ values of $\bZ_{j}$ are $\emptyset$. 
In other words,
the types of qualifications that $i$ reports contains no overlap with the types of qualifications $j$ reports. 
Because $i$ and $j$ have similar ground-truth features, 
IF on $A$ would require that a school treat $i$ and $j$ similarly even though the schools are given vastly different information about the two applicants. 

\section{Main Results}\label{sec:main_results}
In this section, 
we show that pre-processing data with ME can improve IF with little to no performance cost under appropriate conditions. 
Before providing our main results, 
we begin in Section \ref{sec:SVT} by describing a ME method known as singular value thresholding (SVT). 
In Sections \ref{sec:SVT_IF_Z}-\ref{sec:SVT_IF_A}, we show that SVT pre-processing offers IF guarantees on both the observation matrix $Z$ and the ground truth matrix $A$. 
In Section \ref{sec:USVT}, 
we show that the class of SVT thresholds that guarantee IF align with the thresholds used by a well-known ME technique that has strong performance guarantees. 
This connection implies that SVT pre-processing can provide IF without imposing a high performance cost.

\subsection{Singular Value Thresholding}\label{sec:SVT}

Recall the inference task described in Section \ref{sec:setup}. 
In this section, 
we propose to use a popular ME method known as \textbf{singular value thresholding (SVT)} as the pre-processing step. 
That is, for algorithms in the class $\cF( \cH , \Pi ) = \{  f = h \circ \Pi : h \in \cH \} \subset \cF$, we propose that $\Pi$ denote SVT.

More precisely, 
$\svt(Z, \tau, \psi)$ takes in three values:
the observation matrix $Z\in \cZ^{m \times n}$,
a threshold $\tau \geq 0$, 
and an increasing function $\psi : \bbR_{\geq 0} \rightarrow \bbR_{\geq 0}$. 
SVT then proceeds in four steps:
\begin{enumerate}[topsep=9pt,itemsep=1pt]
    \item For any element in $Z$ that is $\emptyset$, 
    replace that value with $0$, 
    i.e., if $Z_{ij} = \emptyset$, 
    re-assign it to $Z_{ij} = 0$. 

    \item Perform the singular value decomposition (SVD):
    \begin{align*}
        Z = \sum_{\ell=1}^{\min (m,n)} \sigma_\ell \bu_\ell \bv_\ell^T ,
    \end{align*}
    where $\sigma_\ell \geq 0$ is the $\ell$-th singular value,
    $\bu_\ell \in \bbR^{m \times 1}$ is the $\ell$-th left singular vector,
    and
	$\bv_\ell \in \bbR^{n \times 1}$ is the $\ell$-th right singular vector.
 
    \item 
    For any index $\ell$ such that $\sigma_\ell > \tau$, 
    add $\ell$ to the set $S(\tau)$ such that $S(\tau) = \{ \ell : \sigma_\ell > \tau \}$.
    
    \item Finally, construct an estimate of $A$:
    \begin{align*}
        \hat{A} = \min \bigg( 1 , \max \bigg( -1 ,  \sum_{\ell \in S(\tau)} \psi( \sigma_\ell ) \bu_\ell \bv_\ell^T \bigg) \bigg) .
    \end{align*}
\end{enumerate}
Intuitively, SVT detects and removes components of the observation matrix $Z$ that correspond to noise while preserving the remaining components $S(\tau)$.  
The threshold $\tau$ determines the boundary between signal and noise, 
where a higher value for $\tau$ means that fewer components are kept.

\subsection{IF With Respect to Observed Covariates} \label{sec:SVT_IF_Z}

In the previous section, we proposed to pre-process $Z$ using  SVT  before applying an inference algorithm $h$ on top of it. 
In this section, we show that using SVT for pre-processing guarantees IF on $Z$.
For the remainder of this section, we fix the $Z$ of interest.

Consider a specific threshold $\tau$ and function $\psi$. 
Recall that $\sigma_\ell$, $\bu_\ell$, and $\bv_\ell$ 
are the $\ell$-th singular value, left singular vector, and right singular vector of $Z$,
respectively.
Recall further that $S(\tau) := \{ \ell : \sigma_\ell > \tau \}$.
Let  
\begin{align*}
    K_2 = \norm{\sum_{\ell \in S(\tau)}\frac{\psi(\sigma_\ell)\bu_\ell \bv_\ell^T}{\sigma_\ell^2}}_{\infty} \sqrt{n} \max_k \norm{\bz_k}_1. 
\end{align*}

\begin{theorem} \label{thm:IF_z}
    Suppose that $h$ is $(\cD, \ell^2)$-individually fair with constant $K_1$, i.e., 
    $$D(h(i, B) , h(j , B)) \leq K_1 || \mathbf{B}_i - \mathbf{B}_j ||_{2},$$ 
    for all $i , j \in [m]$ and $B \in [-1, 1]^{m \times n}$.
    Then, for $f = h \circ \svt(Z , \tau, \psi)$,
	\begin{equation}
		D(f ( i , Z) , f( j , Z) ) 
		\leq K_1 K_2 \norm{\bZ_i - \bZ_j}_1 ,
	\end{equation}
	for all $i ,j \in [m]$, 
    i.e., $f$ is $(D, \ell^1)$-individually fair on $Z$ with constant $K_1 K_2$.
\end{theorem}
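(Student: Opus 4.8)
The plan is to chain the Lipschitz hypothesis on $h$ with a bound on how far SVT can separate two rows, measured in the observed-covariate $\ell^1$ distance. Write $f(i,Z)=h(i,\hat A)$ with $\hat A=\svt(Z,\tau,\psi)$, and let $\tilde A=\sum_{\ell\in S(\tau)}\psi(\sigma_\ell)\bu_\ell\bv_\ell^T$ be the reconstruction before the entrywise clipping. The clipping guarantees $\hat A\in[-1,1]^{m\times n}$, so the hypothesis on $h$ applies with $B=\hat A$ and yields $D(f(i,Z),f(j,Z))\le K_1\norm{\hat A_i-\hat A_j}_2$; it therefore suffices to prove $\norm{\hat A_i-\hat A_j}_2\le K_2\norm{\bZ_i-\bZ_j}_1$. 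Since the clip map $g(x)=\min(1,\max(-1,x))$ is $1$-Lipschitz and acts entrywise, a coordinatewise comparison gives $\norm{\hat A_i-\hat A_j}_2\le\norm{\tilde A_i-\tilde A_j}_2$, so I can work with the unclipped $\tilde A$ from here on.

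The crux is an algebraic reduction using the SVD identities $Z\bv_\ell=\sigma_\ell\bu_\ell$ and $Z^T\bu_\ell=\sigma_\ell\bv_\ell$, both valid for $\ell\in S(\tau)$ since then $\sigma_\ell>\tau\ge0$. Writing the $i$-th row as $\tilde A_i=\sum_{\ell\in S(\tau)}\psi(\sigma_\ell)(\bu_\ell)_i\bv_\ell^T$ and substituting $(\bu_\ell)_i=\sigma_\ell^{-1}\bZ_i\bv_\ell$ gives $\tilde A_i-\tilde A_j=(\bZ_i-\bZ_j)M$ with $M=\sum_{\ell\in S(\tau)}\sigma_\ell^{-1}\psi(\sigma_\ell)\bv_\ell\bv_\ell^T$. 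Substituting $\bv_\ell=\sigma_\ell^{-1}Z^T\bu_\ell$ into one factor of $M$ then yields $M=Z^TN$, where $N=\sum_{\ell\in S(\tau)}\sigma_\ell^{-2}\psi(\sigma_\ell)\bu_\ell\bv_\ell^T$ is exactly the matrix appearing in the definition of $K_2$. Hence $\tilde A_i-\tilde A_j=(\bZ_i-\bZ_j)Z^TN$, which expresses the thresholded row difference as a fixed linear image of the observed row difference.

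It then remains to bound $\norm{(\bZ_i-\bZ_j)Z^TN}_2$. Setting $\mathbf{p}=(\bZ_i-\bZ_j)Z^T$, two elementary inequalities finish the proof: first, a coordinatewise estimate gives $\norm{\mathbf{p}N}_2\le\sqrt n\,\norm{N}_\infty\norm{\mathbf{p}}_1$, with the $\sqrt n$ coming from summing over the $n$ output coordinates and $\norm{N}_\infty$ read as the largest-magnitude entry of $N$; second, a H\"older-type bound gives $\norm{(\bZ_i-\bZ_j)Z^T}_1\le\max_k\norm{\bz_k}_1\,\norm{\bZ_i-\bZ_j}_1$, since each coordinate of $(\bZ_i-\bZ_j)Z^T$ pairs the difference with a row of $Z$ and reorganizing the double sum over rows produces the column sums $\norm{\bz_k}_1$. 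Multiplying these constants reproduces $K_2=\norm{N}_\infty\sqrt n\max_k\norm{\bz_k}_1$, so $\norm{\tilde A_i-\tilde A_j}_2\le K_2\norm{\bZ_i-\bZ_j}_1$, which combined with the first paragraph gives the claim. The main obstacle is the reduction in the second paragraph: recognizing that the two SVD identities collapse the thresholded reconstruction into $(\bZ_i-\bZ_j)Z^TN$ is what converts a statement about the reconstruction into an $\ell^1$ bound on the raw observations. Once that identity is in hand, the remaining norm manipulations are routine, but they must be routed through the entrywise-max norm of $N$ and the column $\ell^1$-norms of $Z$ in order to land exactly on the stated constant $K_2$.
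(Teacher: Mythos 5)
Your proof is correct and takes essentially the same route as the paper's: the same SVD identities collapse the row difference of the thresholded reconstruction into $(\bZ_i-\bZ_j)Z^T N$ with $N=\sum_{\ell\in S(\tau)}\sigma_\ell^{-2}\psi(\sigma_\ell)\bu_\ell\bv_\ell^T$, and the same two elementary norm bounds (the $\sqrt{n}\,\norm{N}_\infty$ estimate and the column-$\ell^1$ bound on $Z$) produce exactly the constant $K_2$. Your explicit handling of the entrywise clipping as a $1$-Lipschitz map is a small point the paper's proof silently omits, but it does not change the argument.
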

Theorem \ref{thm:IF_z} states that when $h$ is IF with Lipschitz constant $K_1$, applying SVT pre-processing preserves IF with constant $K_1 K_2$ with respect to the observed covariates. In order for $h$ with SVT pre-processing to have stronger IF than $h$ alone, we need $K_2 \ll 1$, as we examine next. 

\begin{corollary}\label{cor:k1}
    Suppose $\psi(x) = \beta x$ and $Z$ satisfies the strong incoherence condition\footnote{Strong incoherence is a standard assumption in the ME literature \cite{keshavan2010matrix, negahban2012restricted, chen2015incoherence}. 
    It requires that the singular vectors of a matrix are not sparse, which can make it difficult to estimate the underlying latent matrix when given limited samples.}  with parameter $\mu_1$,
    i.e., 
    \begin{align*}
        \norm{\sum_{\ell \in S(\tau)} \bu_\ell \bv_\ell^T}_\infty \leq \sqrt{\frac{\mu_1 r}{mn}},
    \end{align*}
    where $r = |S(\tau)|$ denotes the rank of $Z$. 
    Then for any threshold $\tau$, $K_2 \leq {\beta \sqrt{rm}} / {\tau}$.
\end{corollary}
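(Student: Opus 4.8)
The plan is to reduce the bound on $K_2$ to a bound on the single matrix $\infty$-norm appearing in its definition, and then dispose of the two scalar factors $\sqrt{n}$ and $\max_k \norm{\bz_k}_1$ separately. First I would substitute $\psi(x) = \beta x$, which gives $\psi(\sigma_\ell)/\sigma_\ell^2 = \beta/\sigma_\ell$, so that the matrix inside the norm becomes $\beta \sum_{\ell \in S(\tau)} \sigma_\ell^{-1} \bu_\ell \bv_\ell^T$ and hence $K_2 = \beta \norm{\sum_{\ell \in S(\tau)} \sigma_\ell^{-1} \bu_\ell \bv_\ell^T}_\infty \sqrt{n} \max_k \norm{\bz_k}_1$. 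Everything then hinges on controlling this weighted entrywise norm.

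Next I would exploit the defining property of $S(\tau)$, namely that $\sigma_\ell > \tau$ for every $\ell \in S(\tau)$, to replace each weight $\sigma_\ell^{-1}$ by the uniform bound $\tau^{-1}$. The target inequality is $\norm{\sum_{\ell \in S(\tau)} \sigma_\ell^{-1} \bu_\ell \bv_\ell^T}_\infty \le \tau^{-1} \norm{\sum_{\ell \in S(\tau)} \bu_\ell \bv_\ell^T}_\infty$, after which the strong incoherence hypothesis bounds the unweighted sum by $\sqrt{\mu_1 r / (mn)}$ with $r = |S(\tau)|$, giving $\norm{\sum_{\ell \in S(\tau)} \sigma_\ell^{-1} \bu_\ell \bv_\ell^T}_\infty \le \tau^{-1} \sqrt{\mu_1 r/(mn)}$. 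For the scalar factors, since every entry of $Z$ lies in $[-1,1]$ after the zero-fill step, each column obeys $\norm{\bz_k}_1 \le m$, so $\max_k \norm{\bz_k}_1 \le m$. Combining the three pieces yields $K_2 \le \beta \tau^{-1} \sqrt{\mu_1 r/(mn)} \cdot \sqrt{n} \cdot m = (\beta/\tau) \sqrt{\mu_1\, rm}$, which is exactly the claimed $\beta\sqrt{rm}/\tau$ when $\mu_1 = 1$ (and otherwise carries a harmless $\sqrt{\mu_1}$ factor).

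The main obstacle is the weight-extraction step. Pulling $\sigma_\ell^{-1} \le \tau^{-1}$ out of an entrywise maximum of a \emph{signed} sum is not valid term by term, because the cancellations that make $\sum_{\ell} \bu_\ell \bv_\ell^T$ small at a given entry need not persist once each term is reweighted. A crude entrywise triangle inequality followed by Cauchy--Schwarz (using orthonormality of the singular vectors) only delivers $\norm{\sum_{\ell \in S(\tau)} \sigma_\ell^{-1} \bu_\ell \bv_\ell^T}_\infty \le \tau^{-1}$, which is too weak: multiplied by $\sqrt{n}\, m$ it gives $\beta m \sqrt{n}/\tau$ rather than the desired $\beta \sqrt{rm}/\tau$. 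The delicate content is therefore that the incoherence control on $UV^T$ must transfer to $U\Sigma^{-1}V^T$ up to the single factor $\tau^{-1}$. I would make this rigorous by writing $U \Sigma^{-1} V^T = (U \Sigma^{-1} U^T)(U V^T)$, using $U^T U = I$ on $S(\tau)$, and bounding the entries of the product through the orthonormality of $U$ and $V$ together with the uniform eigenvalue bound $\sigma_\ell > \tau$, rather than through a naive coefficient pullout. Establishing that factorization-based entrywise bound is the step I expect to require the most care.
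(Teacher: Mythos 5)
Your route is the same as the paper's: substitute $\psi(x)=\beta x$, pull the uniform bound $\sigma_\ell^{-1}\le\tau^{-1}$ out of the entrywise norm, invoke strong incoherence, bound $\max_k\norm{\bz_k}_1\le m$ using $Z_{ij}\in[-1,1]$, and multiply the pieces (your bookkeeping of the residual $\sqrt{\mu_1}$ factor is actually more careful than the paper, which silently drops $\mu_1$ at the incoherence step). The one step you decline to take for free --- replacing $\norm{\sum_{\ell\in S(\tau)}\sigma_\ell^{-1}\bu_\ell\bv_\ell^T}_\infty$ by $\tau^{-1}\norm{\sum_{\ell\in S(\tau)}\bu_\ell\bv_\ell^T}_\infty$ --- is exactly the step the paper does take for free: its proof asserts this inequality on the sole grounds that each $\sigma_\ell>\tau$, with no further argument. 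Your objection is substantively correct: for a signed sum, cancellations present in $\sum_\ell\bu_\ell\bv_\ell^T$ at a given entry need not survive reweighting by $\sigma_\ell^{-1}$, so the aggregated incoherence bound does not by itself control the weighted sum. (The pullout would be immediate under a per-component incoherence hypothesis, e.g.\ a bound on each $\bu_\ell\bv_\ell^T$ or on the row norms of $U$ and $V$ separately, which is closer to how strong incoherence is stated in the literature the footnote cites.)

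That said, the repair you sketch does not obviously close the gap at the stated rate. Writing $U\Sigma^{-1}V^T=(U\Sigma^{-1}U^T)(UV^T)$ is a valid factorization, but bounding the $(i,j)$ entry of the product by $\bigl(\sum_k|(U\Sigma^{-1}U^T)_{ik}|\bigr)\max_k|(UV^T)_{kj}|$ costs a factor of $\sqrt{m}$: the $i$-th row of $U\Sigma^{-1}U^T$ has $\ell^2$ norm at most $\tau^{-1}$ (since $\norm{U^Te_i}_2\le 1$), hence $\ell^1$ norm at most $\sqrt{m}/\tau$, which yields $K_2\le\beta\sqrt{\mu_1 r}\,m/\tau$ rather than $\beta\sqrt{\mu_1 rm}/\tau$. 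So you have correctly located the fragile step and matched the paper everywhere else, but the substitute argument needs either a strengthened (per-component or row-norm) incoherence assumption or a sharper entrywise bound on $U\Sigma^{-1}U^T$ to recover the corollary as stated.
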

Corollary \ref{cor:k1} characterizes common conditions under which $K_2$ scales as $O(\sqrt{r m} / \tau)$. 
Specifically, 
suppose that $\tau \geq \sqrt{2n\beta }$.
Then, $K_2 = O (\sqrt{{rm}/{n}} )$. 
This indicates that combining $h$ with SVT pre-processing would \emph{improve} the IF of $h$ as long as $n = \omega(rm)$. 
In  other words, 
as long as there is enough data $n$ per individual relative to the number of individuals $m$ and the rank $r$,
then $K_2 \rightarrow 0$ as $n \rightarrow \infty$.\footnote{The rank $r$ indicates the ``complexity'' of the ground-truth matrix $A$. Although it is computed using $Z$, it reflects the amount of ``signal'' in $Z$, which generally depends on $A$.}
We discuss the implications of this result further in Section \ref{sec:discussion}. 

\subsection{IF With Respect to Latent Covariates} \label{sec:SVT_IF_A}

In the previous section, 
we showed that SVT pre-processing can improve IF on $Z$. 
In this section, 
we show that SVT pre-processing can also ensure IF on $A$ as long as its estimates $\hat{A}$ are close to the ground-truth values. 

\begin{theorem} \label{thm:IF_a}
    Let $d$ denote the $\ell^q$ norm. 
	Suppose that $h$ is $(\cD, d)$-individually fair with constant $K_1$, i.e., 
    $$D(h(i, B ) , h(j , B )) \leq K_1 || \mathbf{B}_i - \mathbf{B}_j ||_{q},$$ 
    for all $i , j \in [m]$ and $B \in [-1, 1]^{m \times n}$.
     Then, for $f = h \circ \svt(Z , \tau, \psi)$,
	\begin{align}
		D(f (i, Z & ) ,  f( j , Z) )  
		\leq  K_1 \norm{\bA_i - \bA_j}_q 
  + 2 K_1 || \hat{A} - A ||_{q,\infty} , \label{eq:IF_a}
	\end{align} 
    for all $i ,j \in [m]$.
\end{theorem}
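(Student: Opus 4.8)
The plan is to reduce the statement to a direct triangle-inequality argument, exploiting the fact that $f$ factors through the SVT estimate. First I would record that, by the definition of $\cF(\cH,\Pi)$, we have $f(i,Z) = h(i,\hat{A})$ where $\hat{A} = \svt(Z,\tau,\psi)$ is a single matrix that does not depend on $i$ or $j$. I would also note that the final clipping step of SVT (the outer $\min(1,\max(-1,\cdot))$) guarantees $\hat{A} \in [-1,1]^{m\times n}$. This point matters because the Lipschitz hypothesis on $h$ is only assumed to hold for matrices $B \in [-1,1]^{m\times n}$; having $\hat{A}$ live in this set is precisely what licenses us to instantiate the hypothesis at $B = \hat{A}$.

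Next I would apply the $(D,\ell^q)$-individual fairness of $h$ with $B = \hat{A}$ to obtain
\begin{equation*}
D(f(i,Z), f(j,Z)) = D(h(i,\hat{A}), h(j,\hat{A})) \leq K_1 \norm{\hat{\bA}_i - \hat{\bA}_j}_q ,
\end{equation*}
so the whole problem reduces to controlling the row difference $\hat{\bA}_i - \hat{\bA}_j$ of the estimate by the corresponding row difference of the ground truth. Here I would insert and subtract the true rows and apply the triangle inequality in $\ell^q$:
\begin{equation*}
\norm{\hat{\bA}_i - \hat{\bA}_j}_q \leq \norm{\hat{\bA}_i - \bA_i}_q + \norm{\bA_i - \bA_j}_q + \norm{\bA_j - \hat{\bA}_j}_q .
\end{equation*}

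Finally I would bound each of the two estimation-error terms by the worst-case row error. Interpreting $\norm{\cdot}_{q,\infty}$ as the mixed norm $\max_k \norm{\hat{\bA}_k - \bA_k}_q$, both $\norm{\hat{\bA}_i - \bA_i}_q$ and $\norm{\bA_j - \hat{\bA}_j}_q$ are at most $\norm{\hat{A} - A}_{q,\infty}$, which yields $\norm{\hat{\bA}_i - \hat{\bA}_j}_q \leq \norm{\bA_i - \bA_j}_q + 2\norm{\hat{A} - A}_{q,\infty}$; multiplying through by $K_1$ gives exactly \eqref{eq:IF_a}. There is no substantive analytic obstacle here, since the argument is entirely metric-geometric. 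The only points requiring care are the bookkeeping just mentioned: confirming $\hat{A}$ stays in $[-1,1]^{m\times n}$ so the hypothesis on $h$ applies, and fixing the convention for the $(q,\infty)$ mixed norm so that the two endpoint errors are each dominated by a single quantity. Unlike Theorem \ref{thm:IF_z}, no spectral or incoherence input is needed; the estimation accuracy of $\hat{A}$ enters only through the black-box quantity $\norm{\hat{A}-A}_{q,\infty}$, which is controlled separately in the performance analysis of Section \ref{sec:USVT}.
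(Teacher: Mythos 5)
Your proposal is correct and follows essentially the same route as the paper's proof: apply the Lipschitz hypothesis at $B = \hat{A}$, insert the true rows via the triangle inequality in $\ell^q$, and dominate the two estimation-error terms by $\norm{\hat{A}-A}_{q,\infty}$. Your added remark that the clipping step keeps $\hat{A}$ in $[-1,1]^{m\times n}$ (so the hypothesis on $h$ is actually applicable) is a point of care the paper's proof leaves implicit.
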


Theorem \ref{thm:IF_a} states when $h$ is IF with Lipschitz constant $K_1$, then $f$ is approximately IF on $A$ and approaches exact IF as $\hat{A} \rightarrow A$. 
Note that Theorem \ref{thm:IF_a} holds for any $\Pi$. 
This result implies that SVT pre-processing preserves the individual fairness guarantee of $h$ on $A$ as the estimation error of SVT approaches $0$. 
We show in the next section (Proposition \ref{prop:USVT_MSE}) that, under an appropriate choice of threshold, 
the estimation error of SVT indeed goes to $0$ (specifically, that $||\hat{A} - A||_{2,\infty} \rightarrow 0$) as $m , n \rightarrow \infty$. 
Together, 
these two results imply that adding SVT pre-processing to $h$ ensures IF on $A$ under the same conditions that guarantee that SVT (or, more generally, ME) is accurate.\footnote{
	Note that the condition in both theorems
	that $D(h(i, B) , h(j , B)) \leq K_1 || \mathbf{B}_i - \mathbf{B}_j ||_{q}$ for all $i , j \in [m]$ and $B \in [-1, 1]^{m \times n}$ is not strong.
	In fact, if it is not met, then there is no method $\Pi$ such that $f$ is IF. 
}
\vspace{6pt}
\begin{remark}
    Theorem \ref{thm:IF_a} shows that it is possible to achieve approximate IF on $A$, 
    and the tightness of this guarantee depends on the accuracy of $\Pi$.
    Even though IF on $A$ may be desirable, 
    IF on $Z$ is important because both individuals and algorithm designers generally cannot make claims based on the unknown ground-truth matrix $A$; 
    they must point to the evidence (i.e., observations) $Z$.
\end{remark}

\subsection{Performance Under Individual Fairness}\label{sec:USVT}

Recall from Theorem \ref{thm:IF_z} that, 
as long as the threshold $\tau$ is sufficiently large, SVT pre-processing guarantees IF on $Z$.
However, it is unclear if the threshold chosen for IF is good for prediction performance. 
We now show that an adaptive threshold that is known to provide high accuracy coincides with thresholds that guarantee IF on $Z$. 
Because this adaptive threshold guarantees that $\hat{A} \rightarrow A$,
it also guarantees IF on $A$, as per Theorem \ref{thm:IF_a}.
As a result, 
SVT pre-processing under the appropriate threshold ensures IF on both $Z$ and $A$ at little to no performance cost.

Consider a well-known ME method known as \textbf{universal singular value thresholding (USVT)}. 
USVT refines SVT by proposing a universal formula for the threshold $\tau$, 
thereby removing the need to tune $\tau$ by hand.
Under mild assumptions on $A$ and $\Omega$,
USVT has strong performance guarantees. %
In order to study performance, 
let the mean-squared error (MSE) of ME be defined as
\begin{align}
	\mse(\htA) \defeq \frac{1}{mn}  \sum_{i=1}^m \sum_{j=1}^n \bbE \left[(\htA_{ij} - A_{ij})^2\right] . \label{eq:mse}
\end{align}
Let $\norm{M}_*$ denote the nuclear norm of  matrix $M$. 
We begin with a performance guarantee on USVT. 

\begin{proposition}[Modified from Theorem 1.1. in \citet{chatterjee}]\label{prop:USVT_MSE}
	Suppose that the entries of $A$ are independent random variables. 
	Suppose each entry of $A$ is independently observed with probability $p \in [0,1]$. 
	Let $\hat{p}$ be the proportion of observed values, $\psi( x) = x / \hat{p}$, $\epsilon \in (0,1]$, and $w = (2 + \eta)^2$ for $\eta \in (0,1)$. 
	Let $\rho_1 = \max(m,n)$ and $\rho_2 = \min(m,n)$.
	Then, if $p \geq \rho_1^{\epsilon - 1}$ and $\tau = \sqrt{w \rho_1 \hat{p}}$, 
	\begin{align*}
		\mse \left( \svt(Z, \tau, \psi) \right) \leq &C(\eta) \min \left(
		\frac{\norm{A}_*}{\rho_2 \sqrt{\rho_1 p}} , 
		\frac{\norm{A}_*^2}{ \rho_1 \rho_2 } , 
		1
		\right) + C(\epsilon, \eta) \exp( - c (\eta) \rho_1 p ) ,
	\end{align*} 
	where $C(\eta) , c(\eta) > 0$ depend only on $\eta$
	and $C(\epsilon, \eta)$ depends only on $\eta$ and $\epsilon$.\footnote{\label{fn:MSE_USVT}
		This upper bound can be improved when the additional condition that
		$\var(Z_{ij}) \leq \sigma^2$ for all $i,j$ and $\sigma \leq 1$ holds. 
		Then, if  $\tau \geq \sqrt{w n \hat{q}}$, 
		where $\hat{q} = \hat{p} \sigma^2 + \hat{p}(1 - \hat{p}) (1 - \sigma^2)$, 
		$q \geq n^{\epsilon - 1}$, 
		and $q = p \sigma^2 + p(1 - p) (1 - \sigma^2)$:
			\begin{align*}
				\mse( \htA ) &\leq C(\eta) \min \left(
				\frac{\norm{A}_* \sqrt{q} }{m p \sqrt{n}} , 
				\frac{\norm{A}_*^2}{m n } , 
				1
				\right) + C(\epsilon, \eta) \exp( - c (\eta) n q ) .
			\end{align*}
		}
\end{proposition}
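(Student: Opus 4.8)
The plan is to specialize and lightly adapt the proof of Theorem 1.1 in \citet{chatterjee} to our notation, our clipping step, and the scaling function $\psi(x) = x/\hat{p}$. The starting point is to work with the zero-filled observation matrix and decompose it as $Z = M + W$, where $M := \bbE Z$ has entries $M_{ij} = p A_{ij}$ under the independent-masking model, and $W := Z - M$ is a mean-zero noise matrix with independent, bounded entries. Because $\psi(x) = x/\hat{p}$ rescales every retained singular value by $1/\hat{p}$, the estimator $\htA$ is the clipped, rescaled, thresholded matrix, and its un-clipped version targets $M/\hat{p} \approx M/p = A$ since $\hat{p} \to p$. The clipping $\min(1,\max(-1,\cdot))$ only helps here because $A_{ij} \in [-1,1]$: it is a contraction toward the feasible region and can be discarded in an upper bound by a standard argument. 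Thus the task reduces to bounding $\frac{1}{mn\,\hat{p}^2}\norm{\hat{W} - M}_F^2$, where $\hat{W}$ denotes the thresholded (un-rescaled) estimate of $M$, plus a lower-order term from the discrepancy between $1/\hat{p}$ and $1/p$.

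The first quantitative step is to control the spectral norm of the noise, $\norm{W}_{\mathrm{op}}$. I would invoke a concentration bound for the operator norm of a random matrix with independent, mean-zero, bounded entries, which yields $\norm{W}_{\mathrm{op}} \le (2+\eta)\sqrt{\rho_1 \hat{p}}$ with probability at least $1 - C(\epsilon,\eta)\exp(-c(\eta)\rho_1 p)$. This is exactly why the threshold is set to $\tau = \sqrt{w\,\rho_1\,\hat{p}}$ with $w = (2+\eta)^2$: it sits just above the noise floor. Handling the randomness of $\hat{p}$ requires an additional Bernoulli/Chernoff concentration argument so that $\hat{p}$ is close to $p$ on the same high-probability event; this is what contributes the exponential term $C(\epsilon,\eta)\exp(-c(\eta)\rho_1 p)$, and the hypothesis $p \ge \rho_1^{\epsilon-1}$ guarantees $\rho_1 p \ge \rho_1^{\epsilon} \to \infty$, making this term negligible.

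On the event $\tau \ge (2+\eta)\norm{W}_{\mathrm{op}}$, the core is a deterministic approximation lemma for singular value thresholding: the thresholded estimate $\hat{W}$ of $M$ satisfies $\norm{\hat{W} - M}_F^2 \le C(\eta) \sum_{\ell} \min(\sigma_\ell(M)^2, \tau^2)$, where $\sigma_\ell(M)$ are the singular values of $M$. This follows from Weyl's inequality, which transfers singular-value information from $Z$ to $M$ (so that components with $\sigma_\ell(M) \gg \tau$ are retained and those with $\sigma_\ell(M) \ll \tau$ are discarded), together with a per-component accounting of the error. I would then bound $\sum_{\ell} \min(\sigma_\ell(M)^2, \tau^2)$ in three ways: (i) using $\min(\sigma_\ell^2,\tau^2) \le \tau\,\sigma_\ell(M)$ gives $\tau\,\norm{M}_* = \tau\,p\,\norm{A}_*$, which after dividing by $mn\,p^2 = \rho_1\rho_2\,p^2$ and substituting $\tau = \sqrt{w\,\rho_1\,\hat{p}}$ yields the term $\norm{A}_*/(\rho_2\sqrt{\rho_1 p})$; (ii) using $\min(\sigma_\ell^2,\tau^2) \le \sigma_\ell(M)^2$ gives $\norm{M}_F^2 = p^2\norm{A}_F^2 \le p^2\norm{A}_*^2$, yielding $\norm{A}_*^2/(\rho_1\rho_2)$; and (iii) the trivial bound $\mse \le O(1)$ from the $[-1,1]$ entry constraint, yielding the constant term. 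Taking the smallest of these produces the three-term minimum, and the residual error from $\hat{p} \ne p$ is absorbed on the concentration event.

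The main obstacle is the spectral-norm concentration bound on $W$ coupled with the randomness of $\hat{p}$: the entries of $W$ are independent but not identically distributed (their variances depend on $A_{ij}$ and on whether $(i,j)\in\Omega$), and the observation pattern is itself random, so one cannot directly quote the i.i.d.\ version of the operator-norm bound. This is precisely the step where \citet{chatterjee} deploys a careful truncation together with a matrix concentration argument, and I would import that machinery essentially verbatim, verifying only that the variance proxy produced by our masking model matches the $\rho_1\hat{p}$ appearing inside $\tau$. Once the operator-norm event is secured, the deterministic thresholding lemma and the three-way bound on $\sum_{\ell}\min(\sigma_\ell^2,\tau^2)$ are routine.
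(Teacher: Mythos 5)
Your proposal is a faithful high-level reconstruction of Chatterjee's original argument, but it takes a genuinely different route from the paper, which does not prove this proposition at all: the paper's ``proof'' consists of restating Theorem 1.1 of Chatterjee (2015) verbatim and supplying a table of purely notational substitutions ($M \to A$, $X \to Z$, $n \to \rho_1$, $m \to \rho_2$, and so on), asserting that the modified statement is the cited theorem under renaming. What you do instead is sketch the internals of that cited theorem: the decomposition of the zero-filled observation matrix into $pA$ plus a mean-zero bounded noise matrix $W$, the operator-norm concentration $\norm{W} \lesssim (2+\eta)\sqrt{\rho_1 \hat{p}}$ that justifies the choice $\tau = \sqrt{w\rho_1\hat{p}}$, the Chernoff control of $\hat{p}$ versus $p$ producing the additive $\exp(-c(\eta)\rho_1 p)$ term, the deterministic thresholding lemma bounding $\norm{\hat{W}-M}_F^2$ by $C(\eta)\sum_\ell \min(\sigma_\ell(M)^2,\tau^2)$, and the three-way bound yielding the minimum of $\norm{A}_*/(\rho_2\sqrt{\rho_1 p})$, $\norm{A}_*^2/(\rho_1\rho_2)$, and $1$; the observation that clipping to $[-1,1]$ is a contraction toward $A$ and so can be dropped in the upper bound is also correct. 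Your sketch buys a self-contained understanding of where each term in the bound comes from and why $p \ge \rho_1^{\epsilon-1}$ is needed, at the cost of leaving the hardest step --- the spectral-norm concentration for independent but non-identically-distributed entries under random masking --- imported ``essentially verbatim,'' which you acknowledge; the paper's citation-plus-renaming approach buys brevity and avoids any risk of misquoting that machinery, but offers no insight into the structure of the bound. Neither route contains a gap, though if you intend your sketch to stand as a proof you would need to actually state and verify the operator-norm concentration and the deterministic thresholding lemma rather than gesture at them.
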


Proposition \ref{prop:USVT_MSE} states that when $\tau = \sqrt{w \rho_1 \hat{p}}$ and $p$ is large enough,
the MSE of SVT decays at a rate of $o((mn)^{-1})$. 
As an immediate extension,  Proposition \ref{prop:USVT_MSE} tells us that if the loss of $h$ when given perfect information $A$ is small, 
    then the loss of $f = h \circ 
    \textsc{SVT}(Z, \sqrt{\omega \rho_1 \hat{p}}, \psi)$ is also small as $n , m \rightarrow \infty$ because the estimate $\hat{A}$ produced by USVT is close to $A$. 
    
\begin{remark}
    \citet{chatterjee} also show that the MSE of USVT is within a constant multiplicative factor and an exponentially small, additive term of the MSE of the minimax estimator, which implies that one cannot do much better than the USVT (cf. Theorem 1.2 in \citet{chatterjee}). 

\end{remark}
As such, 
SVT is consistent and approximately minimax under the appropriate choice of threshold. 
Next, we connect this finding to our earlier results on IF. \\

\noindent \textbf{Performance under IF on $Z$.}
Suppose that $n > m$. 
Then, $\rho_1 = n$ and 
Theorem \ref{prop:USVT_MSE} indicates that SVT pre-processing with the threshold $\tau = \sqrt{w \hat{p} n}$ has good performance.
Under Corollary \ref{cor:k1}, 
such a threshold also ensures that $f$ with SVT pre-processing is \emph{more} individually fair on $Z$ than $f$ without SVT pre-processing for large enough $n$ such that $n = \omega(r m)$.
Therefore, 
there is no trade-off between performance and IF under SVT pre-processing when $n$ grows at the rate $\omega(rm)$. \\

\noindent \textbf{Performance under IF on $\mathbf{A}$}.
Recalling Theorem \ref{thm:IF_a}, 
ME is approximately individually fair on $A$
and fully individually fair on $A$ when $|| \Pi(Z) - A ||_{q , \infty} = 0$. 
Therefore, the relationship between IF on $A$ and performance under ME is straightforward: 
the lower the estimation error $\norm{ \Pi(Z) - A }_{q , \infty}$,
the more individually fair $f$ is on $A$.

\section{Discussion}\label{sec:discussion}
In this section, 
we interpret the results and discuss the conditions under which SVT pre-processing guarantees IF and good performance simultaneously. \\

\noindent \textbf{Combining the results}.
Under Proposition \ref{prop:USVT_MSE}, 
SVT yields good performance guarantees as $n \rightarrow \infty$ when $\tau = \sqrt{w \hat{p} n}$ and $n \geq m$. 
Under Corollary \ref{cor:k1},
this same $\tau$ guarantees IF on $Z$ with Lipschitz constant $K_1 K_2$, 
where $K_1$ is the Lipschitz constant for $h$ without SVT pre-processing and $K_2 = O(\sqrt{r m  / (n\hat{p})})$.
SVT pre-processing can improve IF on $Z$ \emph{without} sacrificing performance when $K_2 \ll 1$, So, when is $K_2 \ll 1$, and why is $K_2$ sometimes greater than $1$?
To answer this question, 
we examine two data regimes: 
(i) \emph{when $n = o(rm / \hat{p} )$} and (ii) \emph{when $n = \omega(rm / \hat{p} )$}. \\

\noindent \textbf{First data regime}.
In the first data regime, 
Corollary \ref{cor:k1} tells us that $K_2 > 1$, 
which implies that SVT pre-processing does \emph{not} necessarily improve IF. 
This phenomenon occurs because, when there is not much information by which to distinguish between individuals (i.e., $n$, the number of observed features per individual, is small), 
SVT pre-processing produces an $\hat{A}$ that is smoothed across rows. 
That is, it causes $f$ to treat individuals similarly \emph{on the whole}. 

This can, at times, work against IF, which requires that \emph{similar} individuals be treated similarly, but not that \emph{the population} be treated similarly. 
To see why the latter can work against IF, 
consider $g_1(x) = x$ and $g_2(x) = \text{round}(x)$ for $x \in [0, 1]$. 
Under $g_2$, individuals can only receive outcomes $0$ or $1$, so the algorithm treats individuals similarly \emph{on the whole}. By this, we mean that individuals fall into one of two buckets, so the treatment is relatively homogeneous.

On the other hand, under $g_1$, individuals receive one of infinitely many outcomes in the range $[0, 1]$.
Which of the two is individually fair?
Although $g_2$ treats individuals similarly on the whole,
$g_1$ is IF since $d(g_1(x), g_1(x')) = d(x, x')$
while $g_2$ is \emph{not} because $g_2(0.5 - \delta) = 0$ while $g_2( 0.5 + \delta) = 1$ for arbitrarily small $\delta > 0$.
A similar logic can be used to show that SVT pre-processing does not always improve IF in this first data regime.\footnote{
    Although SVT pre-processing does not necessarily improve IF in the first data regime,
    some might argue that the ``smoothing'' that SVT does can prevent $f$ from unnecessarily differentiating between individuals. 
    For example, suppose that $f$ determines how much COVID-19 relief each household gets. 
    Suppose that, due to the short turnaround time, $n = o(r^2 m)$, e.g., the government has little information on how each household has been affected by COVID-19.
    One might argue that, in such situations, 
    the government cannot reliably distinguish between households and should send the same amount of monetary relief to all households rather than tailor the amounts based on limited data.
	The reasoning goes: in this data regime, 
    it is easy to overfit and use spurious information to distinguish between individuals. 
    In this way, 
    one may debate the importance of IF in the first data regime. 
} \bigskip 

\noindent \textbf{Second data regime}.
In the second data regime, 
Corollary \ref{cor:k1} tells us that $K_2 < 1$, 
which implies that SVT pre-processing \emph{improves} IF. 
Intuitively, 
when $n = \omega(rm / \hat{p})$, 
the expected number of observed features per individual grows faster than the number of individuals and rank of the ground truth matrix. 
In this case, 
SVT smooths the data in a different way. 
It produces an $\hat{A}$ that is smoothed across columns. 
It therefore removes noise from individual (row) vectors $\bZ_i$ but leaves enough signal in $\bZ_i$ to differentiate individual $i$ from other individuals, 
thereby avoiding the phenomenon that can occur in the first data regime (that individuals are treated similarly on the whole).
The fact that the observational data is smoothed but individuals remain differentiable allows SVT to improve IF in this data regime. \bigskip

\noindent \textbf{Putting it together}.
SVT pre-processing smooths the data before sending it to $h$, 
and this smoothing operation affects IF differently in different data regimes.
We show, however, that under an appropriately chosen threshold, 
IF on $Z$, IF on $A$, and good performance are simultaneously guaranteed as $n \rightarrow \infty$. 
More precisely,
\emph{when $\tau = \sqrt{w \hat{p} n}$, 
$n = \omega(r m / \hat{p})$, 
and $n$ is sufficiently large, 
SVT pre-processing not only strengthens IF on $Z$, but it also guarantees IF on $A$ and good prediction performance.}

\section{Experiments}\label{sec:experiments}
We provide several experiments that test the effect of SVT pre-processing on IF and performance.
In each experiment,
the inference task is to estimate the unknown $n$-dimensional feature vector $\bA_i$ for each individual $i \in [m]$ using the observations $Z$. 
The results show that SVT pre-processing improves IF, 
both in simulation and in the MovieLens1M dataset.
We also examine the performance of an inference algorithm with and without SVT pre-processing. 
As expected, 
we find that adding SVT pre-processing increases the MSE but only by a small amount; 
by Theorem \ref{prop:USVT_MSE}, we would expect this amount to decay to $0$ as the amount of data grows. 

Below, we divide our discussion into three parts. 
In the first two parts, we describe our experimental setups for the \emph{synthetic data} and on the \emph{MovieLens 1M dataset}.
In the third part, we discuss the results. 
Additional results and implementation details can be found in the Appendix.

\subsection{Setup for Experiment \#1: Synthetic Data} \label{sec:synth-setup}

In Experiment \#1, 
we test $h$ with and without SVT pre-processing on synthetic data, as follows. \\  

\noindent \textbf{Generating the ground truth matrix $A$}.  
Consider $m = 200$ individuals. 
We sample $m$ feature vectors of length $n = 800$, each corresponding to an individual, to form the ground truth matrix $A \in [-1, 1]^{m \times n}$. The feature vectors (i.e., the rows of $A$) are sampled from $c = 10$ clusters, where each cluster is a multivariate normal distribution. The mean of each cluster is a vector of length $n$ drawn uniformly at random from $(-1, 1)$, and the covariance of each cluster is an $n \times n$ diagonal matrix with whose diagonal values are sampled uniformly at random from $(0, 0.1)$.
The feature vectors are then clipped so that all values fall within $[-1, 1]$. \\ 

\noindent \textbf{Generating the observation matrix $Z$}.
Recall that $\Omega$ denotes the set of observed entries.
We generate $Z$ as follows:
\begin{align*}
    Z_{ij} = \begin{cases}
        \text{clip}(A_{ij} + \eta_{ij}, [-1, 1]) , & \text{if } (i, j) \in \Omega ,
        \\
        \emptyset , & \text{otherwise} ,
    \end{cases}
\end{align*}
where $\eta_{ij} \sim \mathcal{N}(0, 0.1)$.
In this section, $(i, j) \in \Omega \subset [m] \times [n]$ with probability $p$. 
This is aligned with the conditions in Proposition \ref{prop:USVT_MSE}. 
In the Appendix, 
we provide results under a different choice of $\Omega$ (specifically, when the probability of observing an individual $i$'s $j$-th feature depends on the cluster to which $i$ belongs). \\

\noindent \textbf{Inference algorithm}. 
Recall that the inference task is to predict the feature vector $\mathbf{A}_i$ for individual $i$ given data $B$ (where $B$ may or may not have undergone SVT pre-processing).
In the synthetic data setting, we let the  algorithm $h : [m] \times [-1, 1]^{m \times n} \rightarrow [-1, 1]^n$ be given as follows.

Let $h' : [m] \times [n] \rightarrow [0, 1]$ denote a deep neural net (DNN) trained on data $B$. 
Let the DNN be composed of three fully connected layers of size 300, 100, and 1 with ReLU activation after the hidden layers and sigmoid after the output layer. 
Lastly, let 
\begin{align*}
    h(i, B) = 2 [ h'(i, 1), h'(i, 2), \hdots, h'(i, n)]^\top - 1.
\end{align*} 

\noindent \textbf{Pre-processing}. 
We compare the IF and performance of $h$ with and without SVT pre-processing. 
When there is no pre-processing step, 
the data $B$ on which $h'$ is trained is $Z$ (missing entries are replaced with zeros). 
When SVT pre-processing is used, 
the data $B$ on which $h'$ is trained is $\svt(Z, \tau, \psi)$, 
where 
$\hat{p} = |\Omega|/(mn)$,
$\psi(x) = x/\hat{p}$,  
$\hat{q} =  0.01^2 \hat{p} + \hat{p}(1 - \hat{p}) (1 - 0.01^2)$, and
$\tau = \sqrt{2.01 n \hat{q}}$.
This form of SVT is consistent with USVT (see Proposition \ref{prop:USVT_MSE} and Footnote \ref{fn:MSE_USVT}).

\subsection{Setup for Experiment \#2: MovieLens 1M Dataset}

In Experiment \#2, 
we test $h$ with and without SVT pre-processing on a popular, real-world dataset known as the MovieLens 1M Dataset.

\textbf{Dataset}.
The MovieLens 1M dataset \cite{harper2015movielens} contains movie ratings data for 6040 users and 3952 movies. 
In the context of this work, this ratings data can be placed in the $m \times n$ matrix $Z$, 
where $m = 6040$ and $n = 3952$.
Each entry $Z_{ij}$ contains user $i$'s rating of movie $j$ if $(i,j)$ is observed, 
and $Z_{ij} = \emptyset$ if user $i$ has not rated movie $j$. 
The ratings are normalized to be between $0$ and $1$. 

As a real-world dataset, 
there is no ground-truth matrix $A$. 
As such, we cannot evaluate performance relative to $A$---our MovieLens discussion instead focuses on IF. 

\textbf{Inference algorithm}. 
Recall that the inference task is to predict the feature vector $\mathbf{A}_i$ for individual $i$ given data $B$ (where $B$ may or may not have undergone SVT pre-processing).
In the MovieLens setting, 
we let the inference algorithm $h : [m] \times [-1, 1]^{m \times n} \rightarrow [-1, 1]^n$ be the $K$-nearest neighbors ($K$-NN) algorithm \cite{sarwar2001item}.\footnote{We use $K$-NN in order to investigate the effect of SVT pre-processing on another common class of algorithms. In particular, $K$-NN smooths data in a way that already encourages IF, 
which makes it particularly meaningful if SVT pre-processing is able to \emph{further} improve IF.}

$K$-NN produces an estimate $\bY_i$ by taking the weighted average of the $K$ users most similar to user $i$. 
In this work, we let $K = 10$ and the similarity between users $i$ and $j$ be measured using adjusted cosine similarity:
\begin{equation*}
    \text{sim}(i,j) = \frac{\sum_{k \in [n]} (B_{ik} - \bar{B}_k)(B_{jk} - \overline{B}_k)}{\sqrt{\sum_{k \in [n]} (B_{ik} - \bar{B}_k)^2 \sum_{k' \in [n]} (B_{jk'} - \bar{B}_{k'})^2}},
\end{equation*}
where $\bar{B}_k$ represents the average of the $k$-th item's ratings.

\textbf{Pre-processing}. 
We compare the IF of $h$ with and without SVT pre-processing. 
When there is no pre-processing step, 
the data $B$ used by $K$-NN is $Z$ (missing entries are replaced with zeros). 
When SVT pre-processing is used, 
the data $B$ used by $K$-NN is $\svt(Z, \tau, \psi)$, 
where 
$\hat{p} = |\Omega|/(mn)$,
$\psi(x) = x/\hat{p}$,  
$\tau = \sqrt{2.01 n \hat{p}}$, 
as consistent with USVT (see Proposition \ref{prop:USVT_MSE}).

\subsection{Results}

\begin{table}[t]
\caption{Results on IF and performance in Experiment \#1.}
\label{synth-random}
\vskip 0.15in
\begin{center}
\begin{small}
\begin{sc}
\begin{tabular}{lccccr}
\toprule
{} & {$\hat{p} = 0.05$} & {$\hat{p} = 0.1$} & {$\hat{p}=0.2$} & {$\hat{p}=0.4$}\\
\midrule
    {MSE($h$)}  & {$0.33 \pm 0.003$} & {$0.21 \pm 0.002$} & {$0.10 \pm 0.003$} & {$0.07 \pm 0.001$} \\
\midrule
    {MSE($f$)}  & {$0.34 \pm 0.004$} & {$0.21 \pm 0.001$} & {$0.11 \pm 0.001$} & {$0.08 \pm 0.001$} \\
\midrule
    {$\IF^h_1(Z)$}  & {$0.23 \pm 0.005$} & {$0.18 \pm 0.003$} & {$0.13 \pm 0.001$} & {$0.07 \pm 0.001$} \\
\midrule
    {$\IF^f_1(Z)$}  & {$0.02 \pm 0.001$} & {$0.02 \pm 0.001$} & {$0.03 \pm 0.001$} & {$0.03 \pm 0.001$} \\
\midrule 
    {\centering $K_2$}  & {$0.06 \pm 0.007$} & {$0.12 \pm 0.003$} & {$0.25 \pm 0.009$} & {$0.49 \pm 0.012$} \\
\midrule
    {$\IF^h_2(A)$}  & {$0.45 \pm 0.011$} & {$0.63 \pm 0.012$} & {$0.81 \pm 0.005$} & {$0.84 \pm 0.013$} \\
\midrule
    {$\IF^f_2(A)$}  & {$0.49 \pm 0.015$} & {$0.65 \pm 0.006$} & {$0.82 \pm 0.003$} & {$0.84 \pm 0.005$} \\
\bottomrule
\end{tabular}
\end{sc}
\end{small}
\end{center}
\vskip -0.2in
\end{table}

\textbf{Metrics}.
For a function
$g: [m] \times \cZ^{m \times n} \rightarrow \bbR$, let
\begin{align*}
	\mse(g) \defeq \frac{1}{mn - |\Omega|}  \sum_{(i, j) \notin \Omega} 
        (g_j(i, Z) - \mathbf{A}_{ij} )^2 , 
\end{align*}
where $g_j(i, Z)$ is the $j$-th element of the vector $g(i, Z)$.
For a matrix $X \in \bbR^{m \times n}$, 
let
\begin{align*}
    \IF^g_q(X) &\coloneqq \frac{1}{m^2} \sum_{i,j \in [m]} \norm{g(i,Z) - g(j,Z)}_2 / \norm{\mathbf{X}_i - \mathbf{X}_j}_q , 
\end{align*}
$\IF^f_1(Z)$ and $\IF^h_1(Z)$ measure IF on $Z$ with and without SVT pre-processing, respectively.
$\IF^f_2(A)$ and $\IF^h_2(A)$ measure IF on $A$ with and without SVT pre-processing, respectively.\footnote{We use the $\ell^1$ norm for $Z$ as per our result in Theorem \ref{thm:IF_z} and the $\ell^2$ norm for $A$ due to the connection between Theorem \ref{thm:IF_a} and Proposition \ref{prop:USVT_MSE}.}
A smaller ratio indicates a stronger IF guarantee. \\ 

\begin{figure}[t]
\begin{center}
\centerline{\includegraphics[width=0.9\columnwidth]{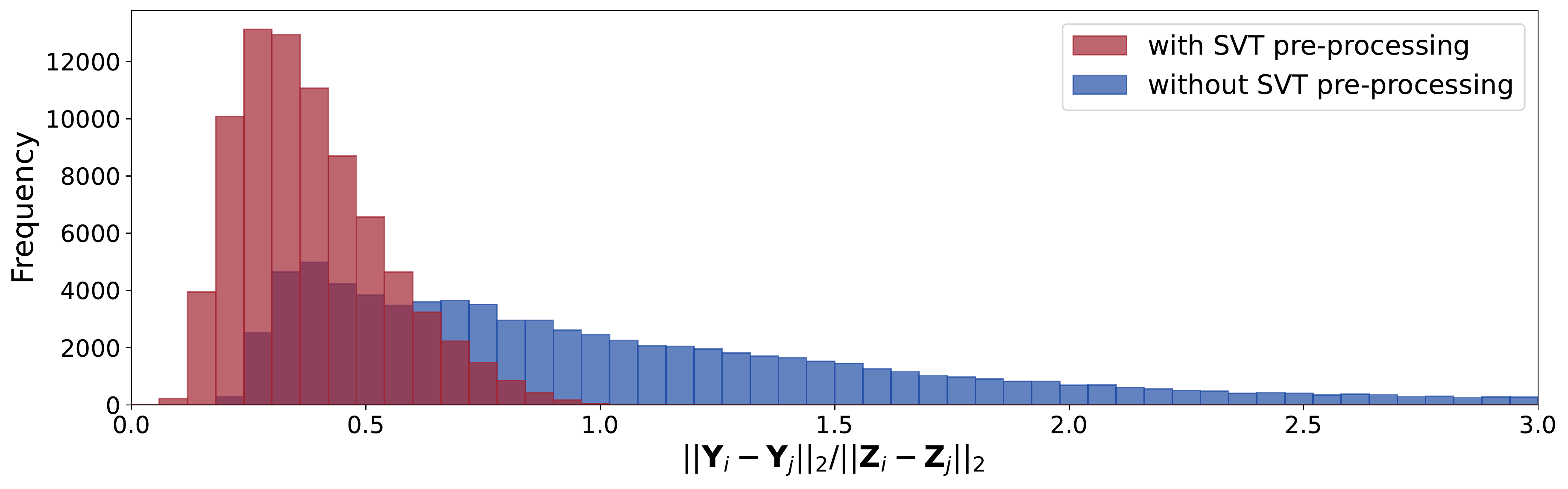}}
\vskip -0.1in
\caption{
Frequencies of  $||\bY_i - \bY_j||_2/\norm{\bZ_i - \bZ_j}_1$ across randomly selected pairs $(i, j)$ in Experiment \#2.
$Y$ denotes the estimate produced by $K$-NN on the MovieLens 1M dataset with (red) and without (blue) SVT pre-processing.}
\label{fig:movielens-knn}
\end{center}
\vskip -0.25in
\end{figure}

\noindent \textbf{Results}. 
Table \ref{synth-random} summarizes the results for Experiment \#1. 
The values are averaged over 10 simulations, 
and the error bars give $+/-$ two standard deviations.
Figures \ref{fig:lipschitz-diff}-\ref{fig:movielens-knn} visualize the effect of SVT pre-processing on IF on $Z$ for Experiments \#1 and \#2. 
We discuss our findings below. \\ 

\noindent \textbf{Effect of SVT pre-processing on IF on $Z$}.
Table \ref{synth-random} verifies that SVT pre-processing improves IF on $Z$ in Experiment \#1. 
In particular, $\IF^f_1(Z)$ is much smaller than $\IF^h_1(Z)$. 
Figures \ref{fig:lipschitz-diff} and \ref{fig:movielens-knn} visualize this effect for  Experiments \#1 and \#2, 
showing that SVT pre-processing causes the difference in two individuals' outcomes relative to the difference in their features to be smaller than without SVT pre-processing. \\ 

\noindent \textbf{Effect of SVT pre-processing on IF on $A$}.
IF on $A$ is comparable though slightly weaker with SVT pre-processing than without it. 
In particular, 
$\IF^f_2(A)$ is slightly larger than $\IF^h_2(A)$ in Table \ref{synth-random}.
This is in line with Theorem \ref{thm:IF_a}, which tells us that adding a pre-processing step $\Pi$ may weaken IF on $A$ if the estimation error of $\Pi$ is non-zero. 
Since SVT cannot estimate $A$ perfectly, it yields some estimation error and, as a result,  slightly weakens $f$'s IF on $A$. 

As illustrated in Table \ref{synth-random}, this effect is small. 
Moreover, the gap between $\IF^f_2(A)$ and $\IF^h_2(A)$ gets smaller as $\hat{p}$ increases. 
This is consistent with our results because the estimation error of SVT decreases as $\hat{p}$ increases (see Proposition \ref{prop:USVT_MSE}), 
which means that the IF on $A$ guarantee improves as $\hat{p}$ increases (see Theorem \ref{thm:IF_a}). \\

\noindent \textbf{Effect of SVT pre-processing on performance}.
The rows in Table \ref{synth-random} corresponding to $\mse(h)$ and $\mse(f)$ measure the error of the DNN without and with SVT pre-processing, respectively, in Experiment \#1. 
As expected from Section \ref{sec:USVT}, they show that SVT pre-processing has a minimal effect on prediction performance, 
i.e., that there is little to no fairness-performance trade-off.

\section{Conclusion}
In this work, we propose using a well-known matrix estimation (ME) method known as singular value thresholding (SVT) to pre-process sparse, noisy data before applying an inference algorithm (e.g., a neural network). 
We show that pre-processing data using SVT before applying an inference algorithm comes with strong individual fairness (IF) guarantees.
Specifically, we derive conditions under which SVT pre-processing \emph{improves} IF. 
We then show that, under these same conditions, 
SVT pre-processing has strong performance guarantees. 
Together, 
these results imply that, under the appropriate conditions, 
SVT pre-processing provides a way to improve IF without imposing a performance cost. 
We verify our results on synthetic data and the MovieLens 1M dataset. 

\section*{Acknowledgements}

We thank our reviewers for their time and helpful comments. We also thank Michael Zhang for providing feedback on earlier versions of this work. This work was supported in parts by the MIT-IBM project on ``Representation
Learning as a Tool for Causal Discovery" and the NSF TRIPODS Phase II grant towards Foundations of Data Science Institute.

\bibliographystyle{customref}
\bibliography{ref}

\begin{thebibliography}{44}
\providecommand{\natexlab}[1]{#1}
\providecommand{\url}[1]{\texttt{#1}}
\expandafter\ifx\csname urlstyle\endcsname\relax
  \providecommand{\doi}[1]{doi: #1}\else
  \providecommand{\doi}{doi: \begingroup \urlstyle{rm}\Url}\fi

\bibitem[Abbe \& Sandon(2015)Abbe and Sandon]{abbe2015community}
Abbe, E. and Sandon, C.
\newblock Community detection in general stochastic block models: Fundamental
  limits and efficient algorithms for recovery.
\newblock In \emph{2015 IEEE 56th Annual Symposium on Foundations of Computer
  Science}, pp.\  670--688. IEEE, 2015.

\bibitem[Agarwal et~al.(2018)Agarwal, Amjad, Shah, and Shen]{agarwal2018model}
Agarwal, A., Amjad, M.~J., Shah, D., and Shen, D.
\newblock Model agnostic time series analysis via matrix estimation.
\newblock \emph{Proceedings of the ACM on Measurement and Analysis of Computing
  Systems}, 2\penalty0 (3):\penalty0 1--39, 2018.

\bibitem[Amjad et~al.(2018)Amjad, Shah, and Shen]{amjad2018robust}
Amjad, M., Shah, D., and Shen, D.
\newblock Robust synthetic control.
\newblock \emph{The Journal of Machine Learning Research}, 19\penalty0
  (1):\penalty0 802--852, 2018.

\bibitem[Anandkumar et~al.(2013)Anandkumar, Ge, Hsu, and
  Kakade]{anandkumar2013tensor}
Anandkumar, A., Ge, R., Hsu, D., and Kakade, S.
\newblock A tensor spectral approach to learning mixed membership community
  models.
\newblock In \emph{Conference on Learning Theory}, pp.\  867--881. PMLR, 2013.

\bibitem[Barocas et~al.(2018)Barocas, Hardt, and
  Narayanan]{barocas2018fairness}
Barocas, S., Hardt, M., and Narayanan, A.
\newblock Fairness and machine learning: Limitations and opportunities, 2018.

\bibitem[Bechavod et~al.(2020)Bechavod, Jung, and Wu]{bechavod2020metric}
Bechavod, Y., Jung, C., and Wu, Z.~S.
\newblock Metric-free individual fairness in online learning.
\newblock \emph{arXiv preprint arXiv:2002.05474}, 2020.

\bibitem[Beutel et~al.(2019)Beutel, Chen, Doshi, Qian, Wei, Wu, Heldt, Zhao,
  Hong, Chi, et~al.]{beutel2019fairness}
Beutel, A., Chen, J., Doshi, T., Qian, H., Wei, L., Wu, Y., Heldt, L., Zhao,
  Z., Hong, L., Chi, E.~H., et~al.
\newblock Fairness in recommendation ranking through pairwise comparisons.
\newblock In \emph{Proceedings of the 25th ACM SIGKDD International Conference
  on Knowledge Discovery \& Data Mining}, pp.\  2212--2220, 2019.

\bibitem[Biega et~al.(2018)Biega, Gummadi, and Weikum]{biega2018equity}
Biega, A.~J., Gummadi, K.~P., and Weikum, G.
\newblock Equity of attention: Amortizing individual fairness in rankings.
\newblock In \emph{The 41st International ACM SIGIR Conference on Research \&
  Development in Information Retrieval}, pp.\  405--414, 2018.

\bibitem[Borgs et~al.(2017)Borgs, Chayes, Lee, and Shah]{borgs2017thy}
Borgs, C., Chayes, J., Lee, C.~E., and Shah, D.
\newblock Thy friend is my friend: Iterative collaborative filtering for sparse
  matrix estimation.
\newblock In \emph{Proceedings of the 31st International Conference on Neural
  Information Processing Systems}, pp.\  4718--4729, 2017.

\bibitem[Cai et~al.(2010)Cai, Cand{\`e}s, and Shen]{cai2010singular}
Cai, J.-F., Cand{\`e}s, E.~J., and Shen, Z.
\newblock A singular value thresholding algorithm for matrix completion.
\newblock \emph{SIAM Journal on Optimization}, 20\penalty0 (4):\penalty0
  1956--1982, 2010.

\bibitem[Cand{\`e}s \& Tao(2010)Cand{\`e}s and Tao]{candes2010power}
Cand{\`e}s, E.~J. and Tao, T.
\newblock The power of convex relaxation: Near-optimal matrix completion.
\newblock \emph{IEEE Transactions on Information Theory}, 56\penalty0
  (5):\penalty0 2053--2080, 2010.

\bibitem[Chatterjee(2015)]{chatterjee}
Chatterjee, S.
\newblock Matrix estimation by universal singular value thresholding.
\newblock \emph{Annals of Statistics}, 43\penalty0 (1):\penalty0 177--214,
  2015.

\bibitem[Chen(2015)]{chen2015incoherence}
Chen, Y.
\newblock Incoherence-optimal matrix completion.
\newblock \emph{IEEE Transactions on Information Theory}, 61\penalty0
  (5):\penalty0 2909--2923, 2015.

\bibitem[Chen \& Wainwright(2015)Chen and Wainwright]{chen2015fast}
Chen, Y. and Wainwright, M.~J.
\newblock Fast low-rank estimation by projected gradient descent: General
  statistical and algorithmic guarantees.
\newblock \emph{arXiv preprint arXiv:1509.03025}, 2015.

\bibitem[Davenport et~al.(2014)Davenport, Plan, Van Den~Berg, and
  Wootters]{davenport20141}
Davenport, M.~A., Plan, Y., Van Den~Berg, E., and Wootters, M.
\newblock 1-bit matrix completion.
\newblock \emph{Information and Inference: A Journal of the IMA}, 3\penalty0
  (3):\penalty0 189--223, 2014.

\bibitem[Dwork et~al.(2012)Dwork, Hardt, Pitassi, Reingold, and
  Zemel]{dwork2012fairness}
Dwork, C., Hardt, M., Pitassi, T., Reingold, O., and Zemel, R.
\newblock Fairness through awareness.
\newblock In \emph{Proceedings of the 3rd Innovations in Theoretical Computer
  Science Conference}, pp.\  214--226, 2012.

\bibitem[Farnadi et~al.(2018)Farnadi, Kouki, Thompson, Srinivasan, and
  Getoor]{farnadi2018fairness}
Farnadi, G., Kouki, P., Thompson, S.~K., Srinivasan, S., and Getoor, L.
\newblock A fairness-aware hybrid recommender system.
\newblock \emph{arXiv preprint arXiv:1809.09030}, 2018.

\bibitem[Foulds et~al.(2020)Foulds, Islam, Keya, and
  Pan]{foulds2020intersectional}
Foulds, J.~R., Islam, R., Keya, K.~N., and Pan, S.
\newblock An intersectional definition of fairness.
\newblock In \emph{2020 IEEE 36th International Conference on Data Engineering
  (ICDE)}, pp.\  1918--1921. IEEE, 2020.

\bibitem[Gajane \& Pechenizkiy(2017)Gajane and
  Pechenizkiy]{gajane2017formalizing}
Gajane, P. and Pechenizkiy, M.
\newblock On formalizing fairness in prediction with machine learning.
\newblock \emph{arXiv preprint arXiv:1710.03184}, 2017.

\bibitem[Gillen et~al.(2018)Gillen, Jung, Kearns, and Roth]{gillen2018online}
Gillen, S., Jung, C., Kearns, M., and Roth, A.
\newblock Online learning with an unknown fairness metric.
\newblock \emph{arXiv preprint arXiv:1802.06936}, 2018.

\bibitem[Harper \& Konstan(2015)Harper and Konstan]{harper2015movielens}
Harper, F.~M. and Konstan, J.~A.
\newblock The movielens datasets: History and context.
\newblock \emph{ACM Transactions on Interactive Intelligent Systems (TIIS)},
  5\penalty0 (4):\penalty0 1--19, 2015.

\bibitem[Hopkins \& Steurer(2017)Hopkins and Steurer]{hopkins2017efficient}
Hopkins, S.~B. and Steurer, D.
\newblock Efficient bayesian estimation from few samples: community detection
  and related problems.
\newblock In \emph{2017 IEEE 58th Annual Symposium on Foundations of Computer
  Science (FOCS)}, pp.\  379--390. IEEE, 2017.

\bibitem[Ilvento(2019)]{ilvento2019metric}
Ilvento, C.
\newblock Metric learning for individual fairness.
\newblock \emph{arXiv preprint arXiv:1906.00250}, 2019.

\bibitem[Islam et~al.(2020)Islam, Keya, Zeng, Pan, and Foulds]{islam2020neural}
Islam, R., Keya, K.~N., Zeng, Z., Pan, S., and Foulds, J.
\newblock Neural fair collaborative filtering.
\newblock \emph{arXiv preprint arXiv:2009.08955}, 2020.

\bibitem[Kamishima et~al.(2012)Kamishima, Akaho, Asoh, and
  Sakuma]{kamishima2012enhancement}
Kamishima, T., Akaho, S., Asoh, H., and Sakuma, J.
\newblock Enhancement of the neutrality in recommendation.
\newblock In \emph{Decisions@ RecSys}, pp.\  8--14. Citeseer, 2012.

\bibitem[Keshavan et~al.(2010{\natexlab{a}})Keshavan, Montanari, and
  Oh]{keshavan2010matrix}
Keshavan, R.~H., Montanari, A., and Oh, S.
\newblock Matrix completion from a few entries.
\newblock \emph{IEEE Transactions on Information Theory}, 56\penalty0
  (6):\penalty0 2980--2998, 2010{\natexlab{a}}.

\bibitem[Keshavan et~al.(2010{\natexlab{b}})Keshavan, Montanari, and
  Oh]{keshavan2010matrix2}
Keshavan, R.~H., Montanari, A., and Oh, S.
\newblock Matrix completion from noisy entries.
\newblock \emph{The Journal of Machine Learning Research}, 11:\penalty0
  2057--2078, 2010{\natexlab{b}}.

\bibitem[Koren(2009)]{koren2009bellkor}
Koren, Y.
\newblock The bellkor solution to the netflix grand prize.
\newblock \emph{Netflix prize documentation}, 81\penalty0 (2009):\penalty0
  1--10, 2009.

\bibitem[Koren et~al.(2009)Koren, Bell, and Volinsky]{koren2009matrix}
Koren, Y., Bell, R., and Volinsky, C.
\newblock Matrix factorization techniques for recommender systems.
\newblock \emph{Computer}, 42\penalty0 (8):\penalty0 30--37, 2009.

\bibitem[Liu \& Burke(2018)Liu and Burke]{liu2018personalizing}
Liu, W. and Burke, R.
\newblock Personalizing fairness-aware re-ranking.
\newblock \emph{arXiv preprint arXiv:1809.02921}, 2018.

\bibitem[Mazumder et~al.(2010)Mazumder, Hastie, and
  Tibshirani]{mazumder2010spectral}
Mazumder, R., Hastie, T., and Tibshirani, R.
\newblock Spectral regularization algorithms for learning large incomplete
  matrices.
\newblock \emph{The Journal of Machine Learning Research}, 11:\penalty0
  2287--2322, 2010.

\bibitem[Negahban \& Wainwright(2012)Negahban and
  Wainwright]{negahban2012restricted}
Negahban, S. and Wainwright, M.~J.
\newblock Restricted strong convexity and weighted matrix completion: Optimal
  bounds with noise.
\newblock \emph{The Journal of Machine Learning Research}, 13:\penalty0
  1665--1697, 2012.

\bibitem[Pitoura et~al.(2021)Pitoura, Stefanidis, and
  Koutrika]{pitoura2021fairness}
Pitoura, E., Stefanidis, K., and Koutrika, G.
\newblock Fairness in rankings and recommendations: An overview.
\newblock \emph{arXiv preprint arXiv:2104.05994}, 2021.

\bibitem[Recht(2011)]{recht2011simpler}
Recht, B.
\newblock A simpler approach to matrix completion.
\newblock \emph{Journal of Machine Learning Research}, 12\penalty0 (12), 2011.

\bibitem[Rennie \& Srebro(2005)Rennie and Srebro]{rennie2005fast}
Rennie, J.~D. and Srebro, N.
\newblock Fast maximum margin matrix factorization for collaborative
  prediction.
\newblock In \emph{Proceedings of the 22nd International Conference on Machine
  Learning}, pp.\  713--719, 2005.

\bibitem[Sarwar et~al.(2001)Sarwar, Karypis, Konstan, and
  Riedl]{sarwar2001item}
Sarwar, B., Karypis, G., Konstan, J., and Riedl, J.
\newblock Item-based collaborative filtering recommendation algorithms.
\newblock In \emph{Proceedings of the 10th International Conference on World
  Wide Web}, pp.\  285--295, 2001.

\bibitem[Serbos et~al.(2017)Serbos, Qi, Mamoulis, Pitoura, and
  Tsaparas]{serbos2017fairness}
Serbos, D., Qi, S., Mamoulis, N., Pitoura, E., and Tsaparas, P.
\newblock Fairness in package-to-group recommendations.
\newblock In \emph{Proceedings of the 26th International Conference on World
  Wide Web}, pp.\  371--379, 2017.

\bibitem[Shah \& Lee(2018)Shah and Lee]{shah2018reducing}
Shah, D. and Lee, C.
\newblock Reducing crowdsourcing to graphon estimation, statistically.
\newblock In \emph{International Conference on Artificial Intelligence and
  Statistics}, pp.\  1741--1750. PMLR, 2018.

\bibitem[Shao et~al.(2022)Shao, Wu, Chen, Zhang, and Wang]{shao2022faircf}
Shao, P., Wu, L., Chen, L., Zhang, K., and Wang, M.
\newblock Faircf: fairness-aware collaborative filtering.
\newblock \emph{Science China Information Sciences}, 65\penalty0 (12):\penalty0
  1--15, 2022.

\bibitem[Song et~al.(2016)Song, Lee, Li, and Shah]{song2016blind}
Song, D., Lee, C.~E., Li, Y., and Shah, D.
\newblock Blind regression: Nonparametric regression for latent variable models
  via collaborative filtering.
\newblock \emph{Advances in Neural Information Processing Systems},
  29:\penalty0 2155--2163, 2016.

\bibitem[Stratigi et~al.(2020)Stratigi, Nummenmaa, Pitoura, and
  Stefanidis]{stratigi2020fair}
Stratigi, M., Nummenmaa, J., Pitoura, E., and Stefanidis, K.
\newblock Fair sequential group recommendations.
\newblock In \emph{Proceedings of the 35th Annual ACM Symposium on Applied
  Computing}, pp.\  1443--1452, 2020.

\bibitem[Yang et~al.(2019)Yang, Zhang, Katabi, and Xu]{yang2019me}
Yang, Y., Zhang, G., Katabi, D., and Xu, Z.
\newblock Me-net: Towards effective adversarial robustness with matrix
  estimation.
\newblock \emph{arXiv preprint arXiv:1905.11971}, 2019.

\bibitem[Yao \& Huang(2017)Yao and Huang]{yao2017beyond}
Yao, S. and Huang, B.
\newblock Beyond parity: Fairness objectives for collaborative filtering.
\newblock \emph{arXiv preprint arXiv:1705.08804}, 2017.

\bibitem[Zhu et~al.(2018)Zhu, Hu, and Caverlee]{zhu2018fairness}
Zhu, Z., Hu, X., and Caverlee, J.
\newblock Fairness-aware tensor-based recommendation.
\newblock In \emph{Proceedings of the 27th ACM International Conference on
  Information and Knowledge Management}, pp.\  1153--1162, 2018.

\end{thebibliography}

\newpage 
\appendix
\section{Appendix}
\subsection{Proof of Theorem \ref{thm:IF_z}}

In order to prove Theorem \ref{thm:IF_z}, we first prove the following two lemmas. 

\begin{lemma}\label{lemma:l2}
Suppose $T \in \mathbb{R}^{m \times n}$ and $\bx \in \mathbb{R}^{n \times 1}.$ Then
\[ \norm{T \bx}_2 \leq \norm{T}_\infty \norm{\bx}_1 \sqrt{m}. \]
\end{lemma}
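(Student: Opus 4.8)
The plan is to bound $\norm{T\bx}_2$ coordinate-by-coordinate and then aggregate over the $m$ rows. Here $\norm{T}_\infty$ should be read as the entrywise max-norm $\max_{i,j} |T_{ij}|$, consistent with its use in the definition of $K_2$ and in the strong incoherence condition of Corollary \ref{cor:k1}. The key observation is that each entry of the vector $T\bx$ is an inner product between a row of $T$ and $\bx$, which is exactly the setting where an $\ell^\infty$--$\ell^1$ (Hölder-type) bound applies.

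First I would write the $i$-th coordinate of $T\bx$ as $(T\bx)_i = \sum_{j=1}^n T_{ij} x_j$ and bound it by
\begin{align*}
    |(T\bx)_i| = \Bigl| \sum_{j=1}^n T_{ij} x_j \Bigr| \leq \sum_{j=1}^n |T_{ij}|\, |x_j| \leq \Bigl( \max_{i,j} |T_{ij}| \Bigr) \sum_{j=1}^n |x_j| = \norm{T}_\infty \norm{\bx}_1 ,
\end{align*}
where the first inequality is the triangle inequality and the second replaces each $|T_{ij}|$ by its maximum over all entries. Crucially, this per-coordinate bound is uniform in $i$, since the right-hand side no longer depends on the row index.

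Next I would square and sum over the $m$ rows:
\begin{align*}
    \norm{T\bx}_2^2 = \sum_{i=1}^m |(T\bx)_i|^2 \leq \sum_{i=1}^m \bigl( \norm{T}_\infty \norm{\bx}_1 \bigr)^2 = m \, \norm{T}_\infty^2 \norm{\bx}_1^2 ,
\end{align*}
and taking square roots yields $\norm{T\bx}_2 \leq \sqrt{m}\, \norm{T}_\infty \norm{\bx}_1$, as claimed.

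This argument is elementary and I do not anticipate a genuine obstacle; the only point requiring care is being explicit that $\norm{T}_\infty$ denotes the entrywise maximum (rather than, say, the operator norm or the max absolute row sum), since the factor of $\sqrt{m}$ and the appearance of $\norm{\bx}_1$ are precisely what one obtains under that convention. The $\sqrt{m}$ enters only through the crude step of bounding all $m$ squared coordinates by the same maximal value; this is exactly the slack that will later let Lemma \ref{lemma:l2} convert the $\ell^2$ Lipschitz hypothesis on $h$ into an $\ell^1$ guarantee on $Z$ in Theorem \ref{thm:IF_z}.
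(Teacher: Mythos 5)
Your proof is correct and follows essentially the same route as the paper's: bound each coordinate $|(T\bx)_i|$ by $\norm{T}_\infty \norm{\bx}_1$ via the triangle inequality and the entrywise max, then sum the $m$ identical squared bounds to pick up the factor $\sqrt{m}$. The paper writes this as a single chain of inequalities rather than isolating the per-coordinate bound, but the argument is the same, and your reading of $\norm{T}_\infty$ as the entrywise maximum matches the paper's usage.
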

\begin{proof}
\begin{align*}
    \norm{T \bx}_2 &= \left(\sum_{i \in [m]} \left( \sum_{j \in [n]} T_{ij} x_j \right)^2\right)^{1/2} 
    \leq \left(\sum_{i \in [m]} \left( \sum_{j \in [n]} |T_{ij}| |x_j| \right)^2\right)^{1/2} 
    \leq \left(\sum_{i \in [m]} \left( \sum_{j \in [n]} \norm{T}_\infty |x_j| \right)^2\right)^{1/2} \\
    &= \norm{T}_\infty \left(\sum_{i \in [m]} \left( \sum_{j \in [n]} |x_j| \right)^2\right)^{1/2} 
    = \norm{T}_\infty \sqrt{m} \left( \sum_{j \in [n]} |x_j| \right) 
    = \norm{T}_\infty \norm{\bx}_1 \sqrt{m}.
\end{align*}
\end{proof}

\begin{lemma}\label{lemma:l1}
Suppose $T \in \mathbb{R}^{m \times n}$ and $\bx \in \mathbb{R}^{n \times 1}$. Then 
\[ \norm{T \bx}_1 \leq \norm{\bx}_1 \max_j \norm{\bt_j}_1. \]
\end{lemma}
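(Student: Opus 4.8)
The plan is to exploit the column-wise structure of the matrix--vector product. Writing $\bt_j \in \bbR^{m \times 1}$ for the $j$-th column of $T$ (consistent with the notational convention fixed earlier in the paper), the product $T\bx$ decomposes as a linear combination of columns,
\[
T\bx = \sum_{j \in [n]} x_j \, \bt_j .
\]
This is the key observation: rather than expanding $\norm{T\bx}_1 = \sum_{i \in [m]} \big| \sum_{j \in [n]} T_{ij} x_j \big|$ and manipulating the double sum directly (as in the proof of Lemma \ref{lemma:l2}), I would work at the level of whole columns so that the $\ell^1$ triangle inequality applies cleanly.

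First I would apply the triangle inequality for $\norm{\cdot}_1$ to the decomposition above, together with absolute homogeneity, to obtain $\norm{T\bx}_1 \leq \sum_{j \in [n]} |x_j| \, \norm{\bt_j}_1$. Next I would bound each column norm by its maximum, replacing $\norm{\bt_j}_1$ with $\max_{j'} \norm{\bt_{j'}}_1$, which is a constant independent of the summation index and can therefore be pulled outside the sum. What remains is $\big( \max_{j'} \norm{\bt_{j'}}_1 \big) \sum_{j \in [n]} |x_j| = \norm{\bx}_1 \, \max_{j} \norm{\bt_j}_1$, which is exactly the claimed bound.

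There is no real obstacle here; the statement is the standard fact that the $\ell^1$ operator norm of a matrix equals its maximum absolute column sum, specialized to a bound against a fixed $\bx$. The only point requiring care is the interpretation of $\bt_j$ as a column (so that $\norm{\bt_j}_1 = \sum_{i \in [m]} |T_{ij}|$) and the correct invocation of the triangle inequality on the column decomposition; everything else is routine. This lemma, paired with Lemma \ref{lemma:l2}, is what I expect to be chained together to control $K_2$ in the proof of Theorem \ref{thm:IF_z}.
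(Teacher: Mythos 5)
Your proposal is correct and is essentially the same argument as the paper's: the paper expands $\norm{T\bx}_1$ as a double sum, applies the triangle inequality entrywise, and swaps the order of summation, which is exactly your column decomposition $T\bx = \sum_{j} x_j \bt_j$ followed by the $\ell^1$ triangle inequality, written in coordinates. Both then bound $\sum_j |x_j| \norm{\bt_j}_1$ by pulling out $\max_j \norm{\bt_j}_1$, so there is nothing to flag.
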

\begin{proof}
Recall $\bT_i$ denotes the $i$-th row of $T$ and $\bt_i$ denotes the $i$-th column of $T$.
\begin{align*}
    \norm{T \bx}_1 &= \sum_{i \in [m]} |T_i^\top \bx|
    = \sum_{i \in [m]} \left| \sum_{j \in [n]} T_{ij} x_j \right| 
    = \sum_{i \in [m]} \sum_{j \in [n]} |T_{ij} x_j|
    \leq \sum_{i \in [m]} \sum_{j \in [n]} |T_{ij}| |x_j| \\
    &= \sum_{j \in [n]} |x_j| \sum_{i \in [m]} |T_{ij}| 
    = \norm{\bx}_1 \max_j \left(\sum_{i \in [m]} |T_{ij}| \right)
    = \norm{\bx}_1 \max_j \norm{\bt_j}_1.
\end{align*}
\end{proof}

\theoremstyle{plain}
\newtheorem*{theorem1}{Theorem 4.1}
\begin{theorem1}
    Suppose that $h$ is $(\cD, \ell^2)$-individually fair with constant $K_1$, i.e., 
    $$D(h(i, B) , h(j , B)) \leq K_1 || \mathbf{B}_i - \mathbf{B}_j ||_{2},$$ 
    for all $i , j \in [m]$ and $B \in [-1, 1]^{m \times n}$.
    Then, for $f = h \circ \svt(Z , \tau, \psi)$,
	\begin{equation}
		D(f ( i , Z) , f( j , Z) ) 
		\leq K_1 K_2 \norm{\bZ_i - \bZ_j}_1 ,
	\end{equation}
	for all $i ,j \in [m]$, 
    i.e., $f$ is $(D, \ell^1)$-individually fair on $Z$ with constant $K_1 K_2$.
\end{theorem1}

\begin{proof}
Let the singular value decomposition (SVD) of $Z = \sum_{\ell=1}^{\min (m,n)} \sigma_\ell \bu_\ell \bv_\ell^T$,
where $\sigma_i, \bu_i, \bv_i$ are the $i$-th singular value, left singular vector, and right singular vector of $Z$ respectively. Given $f = h \circ \svt(Z , \tau, \psi))$, the input $\hat{A}$ to $h$ is the output of running SVT on $Z$, i.e. 
	\begin{align*}
		\hat{A} = \sum_{\ell \in S(\tau)} \psi( \sigma_\ell ) \bu_\ell \bv_\ell^T .
	\end{align*}
We can expand $||\hat{\bA}_i - \hat{\bA}_j||_2$ to get
\begin{align}
    ||\hat{\bA}_i - \hat{\bA}_j||_2 &= \norm{\sum_{\ell \in S(\tau)}\psi(\sigma_\ell)u_{\ell i}\bv_\ell^T - \sum_{\ell \in S(\tau)}\psi(\sigma_\ell)u_{\ell j}\bv_\ell^T}_2 \\
    &= \norm{\sum_{\ell \in S(\tau)}\psi(\sigma_\ell)(u_{\ell i} - u_{\ell j})\bv_\ell^T}_2.  \label{exp1}
\end{align}
Next we rewrite $u_{\ell i} - u_{\ell j}$ in terms of $\bZ_i$ and $\bZ_j$. Since $\bu_\ell$ is the $\ell$-th left singular vector of $Z$, it is the $\ell$-th eigenvector of $ZZ^T$. Let  $\lambda_\ell$ be the $\ell$-th eigenvalue of $ZZ^T$. Note that $\lambda_\ell = \sigma_\ell^2$. Then 
\begin{equation*}
    \lambda_\ell \bu_\ell = Z Z^T \bu_\ell. 
\end{equation*}
 Looking at only the $i$th row, we see that
\begin{align*}
    \lambda_\ell u_{\ell i} &= \bZ_i Z^T  \bu_\ell \\
    \implies u_{\ell i} &= \frac{\bZ_i Z^T  \bu_\ell}{\lambda_\ell} \\
    \implies u_{\ell i} - u_{\ell j} &= \frac{(\bZ_i - \bZ_j) Z^T  \bu_\ell}{\sigma_\ell^2}.
\end{align*}
Plugging this back into equation \eqref{exp1}, we get 
\begin{align}
    ||\hat{\bA}_i - \hat{\bA}_j||_2 &= \norm{\sum_{\ell \in S(\tau)}\frac{\psi(\sigma_\ell)(\bZ_i - \bZ_j) Z^T  \bu_\ell \bv_\ell^T}{\sigma_\ell^2}}_2 \\ 
    &= \norm{\left(\sum_{\ell \in S(\tau)}\frac{\psi(\sigma_\ell)}{\sigma_\ell^2}\left(\bu_\ell \bv_\ell^T \right)^T \right) Z(\bZ_i - \bZ_j)}_2 \label{mineq}
\end{align}
Next we apply Lemma \ref{lemma:l2} to \eqref{mineq} to get 
\begin{equation}
    ||\hat{\bA}_i - \hat{\bA}_j||_2 \leq \norm{\sum_{\ell \in S(\tau)}\frac{\psi(\sigma_\ell) \bu_\ell \bv_\ell^T}{\sigma_\ell^2}}_\infty \sqrt{n} \norm{Z(\bZ_i - \bZ_j)}_1 \label{l2split}
\end{equation}
Apply Lemma \ref{lemma:l1} to $\norm{Z(\bZ_i - \bZ_j)}_1$ in \eqref{l2split} gives us 
\begin{equation}
    ||\hat{\bA}_i - \hat{\bA}_j||_2 \leq \norm{\sum_{\ell \in S(\tau)}\frac{\psi(\sigma_\ell) \bu_\ell \bv_\ell^T}{\sigma_\ell^2}}_\infty \sqrt{n} \max_k \norm{\bz_k}_1 \norm{\bZ_i - \bZ_j}_1
\end{equation}
Since $D(h(i, B) , h(j , B)) \leq K_1 || \mathbf{B}_i - \mathbf{B}_j ||_{2}$, 
\begin{align}
    D(f(i, Z), f(j, Z)) &= D(h (i , \hat{A}) , h( j , \hat{A}) ) \\
    & \leq K_1 \left( \norm{\sum_{\ell \in S(\tau)}\frac{\psi(\sigma_\ell) \bu_\ell \bv_\ell^T}{\sigma_\ell^2}}_\infty \sqrt{n} \max_k \norm{\bz_k}_1 \right) \norm{\bZ_i - \bZ_j}_1 \\
    &= K_1 K_2 \norm{\bZ_i - \bZ_j}_1.
\end{align}
\end{proof}

\subsection{Proof of Corollary \ref{cor:k1}}

\newtheorem*{corollary2}{Corollary 4.2}
\begin{corollary2}
    Suppose $\psi(x) = \beta x$ and $Z$ satisfies the strong incoherence condition with parameter $\mu_1$ \citep{chen2015incoherence}, i.e., 
    \begin{align*}
        \norm{\sum_{\ell \in S(\tau)} \bu_\ell \bv_\ell^T}_\infty \leq \sqrt{\frac{\mu_1 r}{mn}},
    \end{align*}
    where $r = |S(\tau)|$ denotes the rank of $Z$. 
    Then for any threshold $\tau$, $K_2 \leq {\beta \sqrt{rm}} / {\tau}$.
\end{corollary2}

\begin{proof}
 Recall that 
 \[ K_2 = \norm{\sum_{\ell \in S(\tau)}\frac{\psi(\sigma_\ell) \bu_\ell \bv_\ell^T}{\sigma_\ell^2}}_\infty \sqrt{n} \max_k \norm{\bz_k}_1.\]
 Given $\psi(x) = \beta x$, we have
\begin{equation*}
    K_2 = \beta \norm{\sum_{\ell \in S(\tau)}\frac{\bu_\ell \bv_\ell^T}{\sigma_\ell}}_\infty \sqrt{n} \max_k \norm{\bz_k}_1.
\end{equation*}
Recall $S(\tau) \coloneqq \{\ell: \sigma_\ell > \tau\}$ is the set of components whose singular values exceed $\tau$, so the value of any $\sigma_\ell$ in the denominator must be at least $\tau$, giving us
\begin{equation*}
    K_2 \leq  \frac{\beta}{\tau} \norm{\sum_{\ell \in S(\tau)}\bu_\ell \bv_\ell}_\infty \sqrt{n} \max_k \norm{\bz_k}_1.
\end{equation*}
Given $Z$ satisfies the strong incoherence condition, 
\begin{equation*}
    K_2 \leq  \frac{\beta}{\tau} \cdot \sqrt{\frac{r}{mn}} \cdot \sqrt{n} \max_k \norm{\bz_k}_1.
\end{equation*}
Since each entry $Z_{ij} \in [-1,1]$ and there are $m$ entries in each column of $Z$, $\norm{\bz_k}_1 \leq m$.  Hence
\begin{equation*}
    K_2 \leq \frac{\beta}{\tau} \cdot \sqrt{\frac{r}{mn}} \cdot \sqrt{n} \cdot m \leq \frac{\beta \sqrt{rm}}{\tau},
\end{equation*}
concluding our proof.
\end{proof}

\subsection{Proof of Theorem \ref{thm:IF_a}}

\newtheorem*{theorem3}{Theorem 4.3}
\begin{theorem3} 
    Let $d$ denote the $\ell^q$ norm. 
	Suppose that $h$ is $(\cD, d)$-individually fair with constant $K_1$, i.e., 
    $$D(h(i, B ) , h(j , B )) \leq K_1 || \mathbf{B}_i - \mathbf{B}_j ||_{q},$$ 
    for all $i , j \in [m]$ and $B \in [-1, 1]^{m \times n}$.
     Then, for $f = h \circ \svt(Z , \tau, \psi)$,
	\begin{align}
		D(f (i, Z & ) ,  f( j , Z) )  
		\leq  K_1 \norm{\bA_i - \bA_j}_q 
  + 2 K_1 || \hat{A} - A ||_{q,\infty}
	\end{align} 
    for all $i ,j \in [m]$.
\end{theorem3}

\begin{proof}
	Recall that $\norm{M}_{q, \infty} = \max_{i} \norm{ \mathbf{m}_i }_q$.  
	This result follows from the application of the triangle inequality. 
	\begin{align*}
		D(f (i, Z ) , f( j , Z) )  
			&= D( h ( i , \hat{A}) , h ( j , \hat{A} ) )
			\\
			&\leq K_1 \norm{ \hat{\bA}_i - \hat{\bA}_j }_q
			\\
			&\leq K_1 (|| \hat{\bA}_i - \bA_i ||_q
				+ || \hat{\bA}_j - \bA_j ||_q
				+ || \bA_i - \bA_j ||_q )
			\\
			&\leq K_1 ( 2 || \hat{A} - A ||_ {q, \infty}
			+ || \bA_i - \bA_j ||_q )
			\\
			&\leq K_1 || \bA_i - \bA_j ||_q + 2 K_1 || \hat{A} - A ||_{q,\infty} ,
	\end{align*}
	which gives the result as stated. 
\end{proof}

\subsection{Modification of Theorem 1.1 in Chatterjee (2015)}
\newtheorem*{theorem11}{Theorem 1.1 from \citet*{chatterjee}}
\begin{theorem11}
Suppose that we have a $m \times n$ matrix $M$, where $m \leq n$ and the entries of $M$ are bounded by 1 in absolute value. Let $X$ be a matrix whose elements are independent random variables, and $\mathbb{E}(x_{ij}) = m_{ij}$ for all
$i$ and $j$. Assume that the entries of X are also bounded by $1$ in absolute
value, with probability one. Let $p$ be a real number belonging to the interval $[0,1]$. Suppose that each entry of $X$ is observed with probability $p$, and unobserved with probability $1-p$, independently of the other entries.
 
 We construct an estimator $\hat{M}$ of $M$ based on the observed entries of $X$ using the Universal Singular Value Thresholding (USVT) algorithm with threshold $(2 + \eta)\sqrt{n\hat{p}}.$ Suppose that $p \geq n^{-1 + \varepsilon}$ for some $\varepsilon > 0.$ Then 
\[ \text{MSE}(\hat{M}) \leq C \min \left(\frac{\norm{M}_*}{m\sqrt{np}},\frac{\norm{M}_*^2}{mn}, 1\right) + C(\varepsilon)e^{-cnp},\]
where $C$ and $c$ are positive constants that depend only on the choice of $\eta$ and $C(\varepsilon)$ depends only on $\varepsilon$ and $\eta.$ 
\end{theorem11}

In our work, we have modified Theorem 1.1 from \citet*{chatterjee} for our specific setup. The modifications only involve the renaming of variables to keep our notation consistent and to clarify the dependencies between variables. The changes are summarized in the following table.

\begin{table}[h]
\begin{center}
\caption{Modifications to notation in Theorem 1.1 of \citet{chatterjee}.}
\vskip 0.1in
\begin{sc}
\begin{small}
\begin{tabular}{c|c}
\toprule
\textbf{Notation in \citet{chatterjee}} & \textbf{Our Notation} \\
\noalign{\vskip 0.5ex} 
\hline 
\noalign{\vskip 0.5ex} 
$M$ & $A$ \\
$X$ & $Z$ \\
$\hat{M}$ & $\text{SVT}(M, \tau, \psi)$ \\
$n$ & $\rho_1$ \\
$m$ & $\rho_2$ \\
$C$ & $C(\eta)$ \\ 
$c$ & $c(\eta)$ \\
$C(\varepsilon)$ & $C(\varepsilon, \eta)$ \\
\bottomrule
\end{tabular}
\end{small}
\end{sc}
\end{center}
\end{table}

Our modified proposition is as follows.

\newtheorem*{theorem45}{Proposition \ref{prop:USVT_MSE}.}
\begin{theorem45}[Modified from Theorem 1.1. in \citet*{chatterjee}]
	Suppose the elements of $Z$ are independent random variables, each independently observed with probability $p \in [0,1]$.
	Let $\hat{p}$ be the proportion of observed values, $\psi( x) = x / \hat{p}$, $\epsilon \in (0,1]$, and $w = (2 + \eta)^2$ for $\eta \in (0,1)$. 
	Let $\rho_1 = \max(m,n)$ and $\rho_2 = \min(m,n)$.
	Then, if $p \geq \rho_1^{\epsilon - 1}$ for some $\epsilon > 0$ and $\tau = \sqrt{w \rho_1 \hat{p}}$, 
	\begin{align*}
		\mse \left( \svt(Z, \tau, \psi) \right) \leq C(\eta) \min \left(
		\frac{\norm{A}_*}{\rho_2 \sqrt{\rho_1 p}} , 
		\frac{\norm{A}_*^2}{ \rho_1 \rho_2 } , 
		1
		\right) + C(\epsilon, \eta) \exp( - c (\eta) \rho_1 p ) ,
	\end{align*} 
	where $C(\eta) , c(\eta) > 0$ depend only on $\eta$
	and $C(\epsilon, \eta)$ depends only on $\eta$ and $\epsilon$.
\end{theorem45}

\subsection{Experimental Setup}
Below are some additional details about our experimental setup. \\ 

\noindent \textbf{Training the DNN. } Given input matrix $B$, the training set of the deep neural net (DNN) described in Section \ref{sec:synth-setup} consists of (input, target) tuples of the form $(\begin{bmatrix} \mathbf{B}i & \mathbf{b}_j \end{bmatrix}, B_{ij})$. Missing entries in $\mathbf{B}_i$ and $\mathbf{b}_j$ are replaced with zeros. Out of the observed entries $(i,j) \in \Omega $, 80\% are used for training and the remaining 20\% are used for validation; the unobserved entries form our test set. We use a batch size of 128 and 2000 steps of training.

\subsection{Additional Experimental Results}

\subsubsection{Experiment \#3: Observing entries non-uniformly at random} \textbf{Setup. } Recall the setup for Experiment \#1 in Section \ref{sec:synth-setup}. We sample $m = 200$ feature vectors of length $n = 800$, each corresponding to an individual, to form the ground truth matrix $A \in [-1,1]^{m\times n}.$ The feature vectors are sampled from $c = 10$ clusters, where each cluster is a multivariate normal distribution. 

Next we generate the observation matrix $Z$. Recall that $\Omega$ denotes the set of observed entries. Instead of selecting each entry independently with probability $p$, we instead observe entries with different probabilities depending on the cluster it belongs to. For each cluster $k$,  there is an associated random vector $\bp_k \in \mathbb{R}^n$ with entries summing to $p \cdot n$. For each individual $i$ in cluster $k$, the entry $(i,j)$ is observed with probability $p_i[j].$ The expected number of observed entries is $p \cdot n \cdot m$, so the proportion observed is as desired, but the entries are no longer drawn uniformly at random as the probability an entry is drawn is dependent on the cluster it is in. The remaining setup is identical to that for Experiment \#1. \\  

\noindent \textbf{Results. } Table \ref{synth-group} summarizes the results for Experiment \#3. The values are averaged over 10 simulations, and the error bars give $+/-$ two standard deviations. We observe that $\IF^f_1(Z)$ is much smaller than $\IF^h_1(Z)$, which again verifies SVT pre-processing improves IF on $Z$.

Note that the entries $(i,j) \in \Omega$ not being selected uniformly at random violates one of the conditions of Proposition \ref{prop:USVT_MSE}, which states that each entry of $A$ is independently observed with probability $p$. 
Despite violating this condition, we observe in Table \ref{synth-group} that there is minimal decrease in performance when applying SVT pre-processing. This indicates the performance guarantees of SVT are robust to relaxations of the independence condition stated in Proposition \ref{prop:USVT_MSE}. 

\begin{table}[t]
\begin{center}
\caption{Results on IF and performance in Experiment \#3.}
\label{synth-group}
\vskip 0.1in
\begin{small}
\begin{sc}
\begin{tabular}{lcccr}
\toprule
{} & {$\hat{p} = 0.05$} & {$\hat{p} = 0.1$} & {$\hat{p} = 0.2$} & {$\hat{p} = 0.4$} \\
\midrule
    {MSE($h$)}  & {$0.31 \pm 0.002$} & {$0.23 \pm 0.002$} & {$0.16 \pm 0.002$} & {$0.14 \pm 0.001$} \\
\midrule
    {MSE($f$)}  & {$0.33 \pm 0.003$} & {$0.23 \pm 0.001$} & {$0.17 \pm 0.001$} & {$0.14 \pm 0.001$} \\
\midrule
    {$\IF^h_1(Z)$}  & {$0.24 \pm 0.005$} & {$0.17 \pm 0.003$} & {$0.11 \pm 0.002$} & {$0.06 \pm 0.001$} \\
\midrule
    {$\IF^f_1(Z)$}  & {$0.02 \pm 0.001$} & {$0.02 \pm 0.001$} & {$0.03 \pm 0.001$} & {$0.03 \pm 0.001$} \\
\midrule 
    {\centering $K_2$}  & {$0.05 \pm 0.003$} & {$0.11 \pm 0.007$} & {$0.25 \pm 0.009$} & {$0.49 \pm 0.013$} \\
\midrule
    {$\IF^h_2(A)$}  & {$0.46 \pm 0.010$} & {$0.63 \pm 0.011$} & {$0.75 \pm 0.015$} & {$0.75 \pm 0.007$} \\
\midrule
    {$\IF^f_2(A)$}  & {$0.49 \pm 0.014$} & {$0.64 \pm 0.006$} & {$0.75 \pm 0.008$} & {$0.73 \pm 0.014$} \\
\bottomrule
\end{tabular}
\end{sc}
\end{small}
\end{center}
\end{table}

\subsubsection{Experiment \#4: Varying length of feature vectors} 
\textbf{Setup. } Consider $m = 500$ individuals. We sample $m$ feature vectors of length $n$ from $c = 20$ clusters, where each cluster is a multivariate normal distribution. The mean of each cluster is a vector of length $n$ drawn uniformly at random from $(-1, 1)$, and the covariance of each cluster is an $n \times n$ diagonal matrix with whose diagonal values are sampled uniformly at random from $(0, 0.1)$.
The feature vectors are then clipped so that all values fall within $[-1, 1]$. When generating the observation matrix $Z$, we observe a proportion $p = 0.2$ of entries uniformly at random. 
Instead of varying the value of $p$, we instead create datasets for varying values of $n$, the length of the feature vector.
The remaining setup is identical to that for Experiment 1, described in Section \ref{sec:synth-setup}.

\begin{table}[h]
\caption{Results on IF and performance over different values of $n$ in Experiment \#4. }
\label{synth-n}
\vskip 0.1in
\begin{center}
\begin{small}
\begin{sc}
\begin{tabular}{lccccr}
\toprule
{} & {$n = 25$} & {$ n = 100 $} & {$n = 400$} & {$n = 800$} \\
\midrule
    {MSE($h$)}  & {$0.36 \pm 0.004$} & {$0.28 \pm 0.003$} & {$0.13 \pm 0.001$} & {$0.10 \pm 0.001$} \\
\midrule
    {MSE($f$)}  & {$0.36 \pm 0.004$} & {$0.28 \pm 0.003$} & {$0.12 \pm 0.001$} & {$0.10 \pm 0.001$} \\
\midrule
    {$\IF^h_1(Z)$}  & {$0.43 \pm 0.004$} & {$0.26 \pm 0.002$} & {$0.17 \pm 0.001$} & {$0.13 \pm 0.001$} \\
\midrule
    {$\IF^f_1(Z)$}  & {$0.15 \pm 0.003$} & {$0.07 \pm 0.001$} & {$0.05 \pm 0.001$} & {$0.03 \pm 0.001$} \\
\midrule 
    {\centering $K_2$}  & {$0.21 \pm 0.020$} & {$0.06 \pm 0.002$} & {$0.03 \pm 0.001$} & {$0.02 \pm 0.001$} \\
\midrule
    {$\IF^h_2(A)$}  & {$0.51 \pm 0.004$} & {$0.63 \pm 0.005$} & {$0.80 \pm 0.007$} & {$0.83 \pm 0.007$} \\
\midrule
    {$\IF^f_2(A)$}  & {$0.62 \pm 0.011$} & {$0.67 \pm 0.007$} & {$0.82 \pm 0.011$} & {$0.83 \pm 0.010$} \\
\bottomrule
\end{tabular}
\end{sc}
\end{small}
\end{center}
\end{table}

\noindent \textbf{Results. } Table \ref{synth-n} summarizes the results for Experiment \#4. Recall from Section \ref{sec:discussion} that our theoretical guarantees for SVT pre-processing simultaneously strengthening IF and having good prediction performance hold when $n = \omega(rm/\hat{p}).$ In the above setting, both $n=25$ and $n = 100$ fall within $o(rm/\hat{p}).$ However, we observe that $\IF^f_1(Z)$ is much smaller than $\IF^h_1(Z)$ for all the values of $n$, and there are very minimal differences between MSE$(h)$ and MSE$(f)$. This means that even when $n = o(rm/\hat{p})$, we still see large improvements in IF with respect to $Z$ with little to no effect on prediction performance when applying SVT pre-processing. This demonstrates that the empirical IF and performance benefits of SVT pre-processing are not restricted to when $n = \omega(rm/\hat{p}).$

\end{document}